\PassOptionsToPackage{table}{xcolor}
\documentclass{samsung}

\usepackage{amsmath,amsfonts,bm}

\def\1{\bm{1}}

\DeclareMathAlphabet{\mathsfit}{\encodingdefault}{\sfdefault}{m}{sl}
\SetMathAlphabet{\mathsfit}{bold}{\encodingdefault}{\sfdefault}{bx}{n}

\usepackage{hyperref}
\usepackage{url}
\usepackage{graphicx}
\usepackage{subcaption}

\usepackage{booktabs}
\usepackage{tabularx}
\usepackage{tcolorbox}
\usepackage{titletoc}
\usepackage{multirow}
\usepackage{xcolor}
\usepackage{colortbl}
\usepackage{amssymb,amsmath,amsthm}
\usepackage[capitalize,noabbrev]{cleveref}
\usepackage{enumitem}
\usepackage{wrapfig}
\BeforeBeginEnvironment{wrapfigure}{\par}
\usepackage{algorithm}
\usepackage{algpseudocode}

\newcommand{\com}[1]{\tiny$\pm$#1}

\theoremstyle{definition}
\newtheorem{theorem}{Theorem}[section]

\newtheorem{lemma}[theorem]{Lemma}

\theoremstyle{definition}

\newtheorem{assumption}[theorem]{Assumption}
\crefname{assumption}{Assumption}{Assumptions}
\newtheorem{remark}[theorem]{Remark}

\algrenewcommand\algorithmicensure{\textbf{Output:}}

\title{Reasoning-Preserving Fine-Tuning of Post-RL LLMs with Null-Basis LoRA}

\author[1,2]{Wenzhi Fang}
\author[1]{Nicholas Tzou}
\author[1]{Lazar Valkov}
\author[1]{Srinivas Chappidi}
\affiliation[1]{Samsung Research America, Mountain View, CA, USA}
\affiliation[2]{Purdue University, West Lafayette, IN, USA}
\correspondence{\email{fang375@purdue.edu}, \email{n.tzou@samsung.com}}

\definecolor{row-highlight}{RGB}{205, 220, 236}

\abstract{
Reinforcement learning (RL)-based post-training has become an effective approach for eliciting reasoning capabilities in large language models (LLMs). However, adapting post-RL models to new knowledge domains or behaviors through subsequent supervised fine-tuning (SFT) can severely overwrite these capabilities. 
Existing approaches mitigate such forgetting through experience replay, specialized initialization, or constrained optimization using gradient projection, but either provide limited preservation or incur substantial training overhead. 
Our analysis shows that reasoning activations concentrate in low-dimensional subspaces, leaving substantial null-space capacity for adaptation, and that the corresponding approximate null spaces can be reliably estimated from a modest number of examples. 
Motivated by these observations, we propose Null-Basis Low-Rank Adaptation (NB-LoRA), a parameter-efficient method for adapting post-RL LLMs while preserving their acquired reasoning ability. We formulate reasoning retention as a layer-wise hidden-state preservation constraint and construct a fixed approximate null basis from reasoning activations. LoRA updates are then reparameterized through this basis, enforcing the preservation constraint throughout fine-tuning. 
Extensive experiments across multiple RL-trained LLMs and diverse downstream tasks show that NB-LoRA matches standard LoRA in adaptation performance, maintains reasoning accuracy near pre-fine-tuning levels, and generalizes this preservation to held-out reasoning benchmarks.
}

\begin{document}

\maketitle


\section{Introduction}
Reinforcement learning (RL) based post-training has become an effective approach for eliciting reasoning capabilities in large language models (LLMs)~\citep{shao2024deepseekmath,yang2024qwen2}. However, most current reasoning-oriented RL methods primarily exploit capabilities already encoded in the base model by reshaping its output distribution, and typically fall short of instilling new knowledge. Moreover, the resulting models often generate lengthy reasoning traces, even for tasks where concise responses would suffice. Subsequent supervised fine-tuning (SFT) therefore remains necessary to inject new knowledge or adapt the model's response style in some target tasks. Yet these RL-acquired capabilities are fragile under such adaptation: RL only mildly perturbs the base model, staying close to it in distribution~\citep{shenfeld2025rlrazor}, whereas the subsequent SFT is not similarly constrained and can drift far from that distribution, easily overwriting the RL-shaped reasoning behavior~\citep{niu2026nondecoupling}. 


Retaining previously acquired capabilities while adapting to new data is a classic problem of catastrophic forgetting, for which continual learning provides several natural solutions~\citep{kirkpatrick2017overcoming}. Conventional approaches such as experience replay~\citep{rolnick2019experience} and null-space update projection~\citep{wang2021training}, however, can be difficult to apply to reasoning-oriented LLM adaptation. Experience replay requires carefully balancing prior reasoning data with new-task data: insufficient replay may fail to preserve reasoning, whereas excessive replay may hinder adaptation to the new task. Null-space update projection methods instead constrain full-model updates, introducing substantial computational and memory overhead at LLM scale. More recent parameter-efficient approaches mitigate forgetting through low-rank adaptation (LoRA) and carefully chosen LoRA initialization~\citep{wang2025milora}. However, due to unconstrained update, these methods may still overwrite the reasoning behavior acquired during RL post-training, which can be \emph{particularly sensitive} to further adaptation. This raises the central question of this paper: 

\emph{How can we efficiently adapt a RL-trained LLM to new tasks while preserving its acquired reasoning ability?}


\subsection{Contribution}
To address this, we propose Null-Basis LoRA (NB-LoRA), a parameter-efficient fine-tuning method that reparameterizes LoRA updates through a fixed null basis of reasoning hidden states. The idea builds on the fact that if an update does not change a layer's outputs on the hidden states produced by reasoning examples, the reasoning computation carried by those states is left intact. Specifically, we first compute, once and offline, a basis $\mathbf{N}_s$ for the (approximate) null space of the reasoning hidden-state matrix $\mathbf{H}$, and reparameterize the update as $\Delta \mathbf{W} = \mathbf{B}\mathbf{C}\mathbf{N}_s^{\top}$ with $\mathbf{N}_s$ frozen and the low-rank factors $\mathbf{B}$ and $\mathbf{C}$ trainable (Figure~\ref{fig:overview}). Any $\mathbf{B}$ and $\mathbf{C}$ satisfy the preservation constraint $\mathbf{H}\Delta\mathbf{W}^{\top}=\mathbf{0}$ exactly, enabling unconstrained optimization with standard optimizers.

Our main contributions are summarized as follows.
\begin{itemize}[leftmargin=*, itemsep=2pt, topsep=2pt, parsep=0pt]
    \item We formulate post-RL reasoning retention as a layer-wise hidden-state preservation constraint and introduce Null-Basis LoRA (NB-LoRA), a null-basis low-rank reparameterization that satisfies this constraint by construction while retaining the optimizer simplicity of standard LoRA.

    \item We show that reasoning activations in RL-trained LLMs occupy a low-dimensional subspace despite long reasoning traces, leaving a substantial approximate null space for adaptation that can be estimated from only a small set of reasoning examples.

    \item We empirically demonstrate that NB-LoRA preserves RL-acquired reasoning ability while maintaining strong downstream adaptation performance, generalizes retention to held-out reasoning benchmarks, and substantially improves training efficiency over existing works.
\end{itemize}

\begin{figure}
    \centering
    \includegraphics[width=0.95\textwidth]{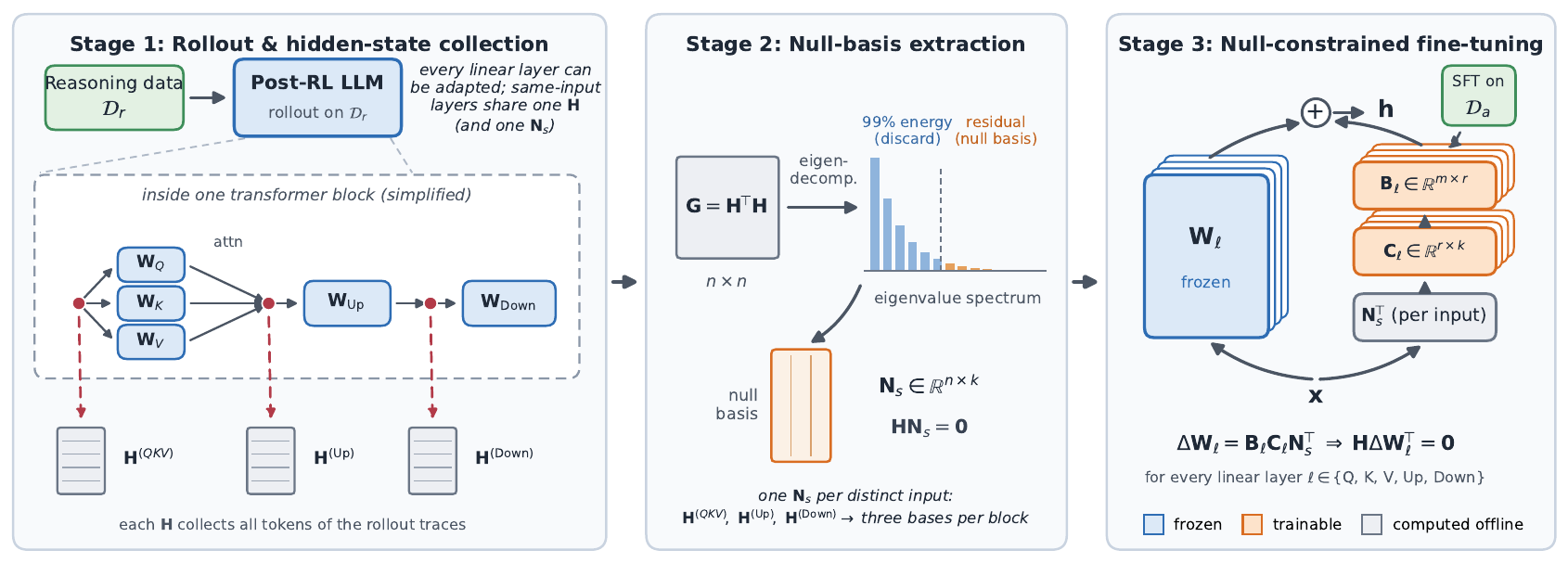}
    \caption{Overview of NB-LoRA. \textbf{Stage 1:} We use the post-RL model to generate rollouts on reasoning data $\mathcal{D}_r$ and collect the input hidden states
$\mathbf{H}$ of each target linear layer. \textbf{Stage 2:} The eigendecomposition of
$\mathbf{H}^{\top}\mathbf{H}$ yields an approximate null basis
$\mathbf{N}_s$ from the eigenvectors outside the top $\eta$ (i.e., 99\%) spectral energy,
so that $\mathbf{H}\mathbf{N}_s=\mathbf{0}$. \textbf{Stage 3:} The update is
reparameterized as $\Delta\mathbf{W}=\mathbf{B}\mathbf{C}\mathbf{N}_s^{\top}$, where $\mathbf{B}$ and $\mathbf{C}$ are trained on the adaptation data
$\mathcal{D}_a$, while $\mathbf{N}_s$ remains frozen.}
\label{fig:overview}
\vspace{-1.5em}
\end{figure}

\section{Related Work}

\paragraph{Interaction between SFT and RL in LLM post-training.}
RL has been widely adopted in LLM post-training to improve reasoning capabilities~\citep{yang2024qwen2,shao2024deepseekmath}. Typical post-training pipelines combine SFT and RL because they provide complementary learning signals: SFT introduces expert behavior and new knowledge, whereas online RL refines the model through exploration. ReFT follows the conventional SFT-then-RL pipeline~\citep{trung2024reft}, while DeepSeek-R1 employs a multistage SFT-RL-SFT-RL pipeline~\citep{deepseekai2025r1}. Recently, \citet{ma2026relift} proposed ReLIFT, which interleaves RL with online SFT on problems beyond the model's capabilities. Because its SFT data are collected during RL, the two training objectives are tightly coupled. In general, however, subsequent SFT may target a different task and reduce the reward acquired through RL~\citep{niu2026nondecoupling}. We study this RL-then-SFT setting, aiming to adapt the model without overwriting its RL-acquired reasoning.

\paragraph{Mitigating catastrophic forgetting.}
Existing approaches to catastrophic forgetting include regularization, replay, on-policy distillation, and update constraints. Elastic weight consolidation penalizes changes to parameters important for prior tasks~\citep{kirkpatrick2017overcoming}, while replay mixes stored~\citep{rolnick2019experience} or model-generated examples~\citep{sun2020lamol} with new-task data. SDFT uses a demonstration-conditioned copy of the model to supervise on-policy trajectories, reducing distribution mismatch and forgetting, but is limited by the model's in-context learning ability and can fail under distribution shift~\citep{shenfeld2026selfdistillation,wang2026denser}. Constraint-based methods prevent loss increases on prior tasks~\citep{lopezpaz2017gem} or project gradients away from directions affecting previous outputs~\citep{farajtabar2020orthogonal}. More closely related to our work, NSCL projects updates into activation-derived null spaces at each step~\citep{wang2021training}. Unlike these soft, replay-based, or per-step constraints, the proposed NB-LoRA encodes a fixed reasoning-preservation constraint directly into the adapter.

\paragraph{Parameter-efficient fine-tuning.}
Parameter-efficient fine-tuning (PEFT) adapts a frozen backbone by training only a small number of additional parameters~\citep{houlsby2019parameter,li2021prefix}. A prominent PEFT approach, LoRA, parameterizes weight updates with trainable low-rank factors~\citep{hu2022lora}. Several extensions apply LoRA to continual learning. O-LoRA assigns sequential tasks to orthogonal low-rank subspaces~\citep{wang2023orthogonal}, while LB-CL transfers task-relevant low-rank parameters and projects gradients to maintain orthogonality with prior task subspaces~\citep{qiao2024learn}. Merge Before Forget instead orthogonally initializes and continually merges task updates into a single LoRA, maintaining constant memory across tasks~\citep{qiao2026merge}. These methods require prior tasks to be adapted through LoRA, whereas our setting protects reasoning already embedded in an RL-trained backbone. LoRA-Null and MiLoRA initialize adapters using null or minor singular subspaces~\citep{tang2026put,wang2025milora}, but do not constrain subsequent updates. Our proposed NB-LoRA instead enforces preservation throughout training by null-basis-based reparameterization. 

\section{Null-Basis-based Parameter-Efficient Fine-Tuning}

\subsection{Problem Setup}
We consider a reasoning-capable LLM obtained via RL-based post-training. Let $\mathcal{D}_r$ denote the dataset used in the RL stage, whose induced capability we wish to preserve, and let $\mathcal{D}_a$ denote an adaptation dataset for subsequent SFT, such as knowledge injection or concise-response tuning. These two cases are representative of common post-RL adaptation goals: one seeks to augment the model's knowledge, while the other seeks to modify output behavior on target domains without altering its reasoning competence. Our goal is to optimize the model on $\mathcal{D}_a$ while minimizing interference with reasoning performance on $\mathcal{D}_r$.

We focus on adapting linear layers in the model, because transformer-based LLMs are built largely around linear modules, particularly the attention and MLP projections. For a linear layer with weight matrix $\mathbf{W} \in \mathbb{R}^{m \times n}$, standard fine-tuning learns an update $\Delta \mathbf{W}$ and replaces $\mathbf{W}$ with $\mathbf{W} + \Delta \mathbf{W}$. Such unconstrained updates can modify the layer outputs for both adaptation and reasoning examples. 
The objective of this work is to constrain the update $\Delta \mathbf{W}$ to a restricted subspace in a parameter- and training-efficient manner, such that the update does not alter the model outputs on reasoning data.

\subsection{Hidden-State Preservation Constraint}
Let $\mathbf{H} \in \mathbb{R}^{N \times n}$ denote a matrix of hidden states collected from $\mathcal{D}_r$ for the input of a target linear layer, where each row of $\mathbf{H}$ corresponds to one token representation and $N$ is the total number of collected tokens. To preserve the pre-fine-tuning transformation of these reasoning states, we require
\begin{equation}
\mathbf{H}(\mathbf{W} + \Delta \mathbf{W})^{\top} = \mathbf{H}\mathbf{W}^{\top}.
\label{eq:preserve}
\end{equation}
Equation~\eqref{eq:preserve} is equivalent to
$\mathbf{H}\Delta \mathbf{W}^{\top} = \mathbf{0}.$
Therefore, a sufficient condition for preserving the layer outputs on reasoning states is that every column of $\Delta \mathbf{W}^{\top}$ lies in the null space of $\mathbf{H}$.

This constraint has two desirable properties. First, it provides layer-wise and fine-grained preservation signals by constraining each adapted linear layer with hidden states observed on reasoning data, thereby directly guiding the search space of $\Delta \mathbf{W}$ toward directions that preserve reasoning-related capabilities. Second, in transformer blocks, multiple projections such as $\mathbf{Q}$, $\mathbf{K}$, and $\mathbf{V}$ share the same input hidden states, so the corresponding null space can be computed once and reused across these projections, substantially reducing the computational cost of constructing the constraint.

\subsection{Feasibility of Null-Space Constraints}\label{sec:sub:subspace_large_enough}

Two key questions arise when applying null-space constraints to reasoning models.
\emph{1) Does $\mathbf{H}$ admit a sufficiently large approximate null space?}
Long reasoning traces produce many token-level observations ($N \gg n$), potentially increasing $\operatorname{rank}(\mathbf{H})$ toward $n$ and reducing its null-space dimension,
$\dim(\operatorname{Null}(\mathbf{H})) = n - \operatorname{rank}(\mathbf{H})$.
\emph{2) Can this null space be effectively estimated using only a limited number of reasoning examples?}
Constructing $\mathbf{H}$ from a large reasoning corpus can be computationally expensive.
To answer these questions, we empirically analyze the spectrum and null-space structure of $\mathbf{H}$ using hidden states extracted from Qwen2.5-3B~\citep{yang2024qwen2} after RL post-training on the MATH-lighteval dataset~\citep{hendrycks2021measuring}. Notably, the null space of $\mathbf{H}$ can be obtained from the eigendecomposition of the much smaller $n \times n$ matrix $\mathbf{H}^{\top}\mathbf{H}$. Since hidden states are noisy and $\mathbf{H}$ is rarely exactly low rank, we define an approximate null space following the approach of~\citet{wang2021training}: we retain the principal directions that capture 99\% of the spectral energy and treat the remaining directions as the approximate null-space basis.

\begin{table}[h]
\centering
\caption{Approximate null-space dimensions ($k$) of representative Qwen2.5-3B layers under a 99\% energy-retention threshold. Percentages are relative to the total dimension.}
\label{tab:null_space_dim}
\resizebox{\textwidth}{!}{
\begin{tabular}{lcrrrrrrr}
\toprule
Module & Total Dimension & Layer 0 & Layer 6 & Layer 12 & Layer 18 & Layer 24 & Layer 30 & Average (\%) \\
\midrule
QKV  & 2048  & 1378 & 765  & 755  & 515  & 407  & 393  & 702 (34.3\%) \\
Down & 11008 & 7028 & 7343 & 5474 & 2272 & 1701 & 5940 & 4960 (45.1\%) \\
Up   & 2048  & 1037 & 1138 & 574  & 325  & 259  & 251  & 597 (29.2\%) \\
\bottomrule
\end{tabular}
}
\end{table}

\textbf{Spectrum analysis.} We collect hidden states (i.e., $\mathbf{H}$) from all the tokens in the reasoning traces of MATH-lighteval training samples. Figure~\ref{fig:normalized_activation_spectrum} shows the normalized eigenvalue spectrum of $\mathbf{H}^{\top}\mathbf{H}$ for all the linear modules across layers 0, 6, 12, 18, 24, and 30 of Qwen2.5-3B. In all cases, the spectrum decays rapidly by several orders of magnitude within the first few hundred indices, indicating that the hidden states of reasoning traces span a proper subspace of the full representation space. We report the dimensions of the approximate null spaces (99\% energy threshold) of the hidden-state matrices for each module in Table~\ref{tab:null_space_dim}, which confirms that substantial null-space capacity remains available for further adaptation.

\begin{figure}[t]
\centering
\includegraphics[width=\linewidth]{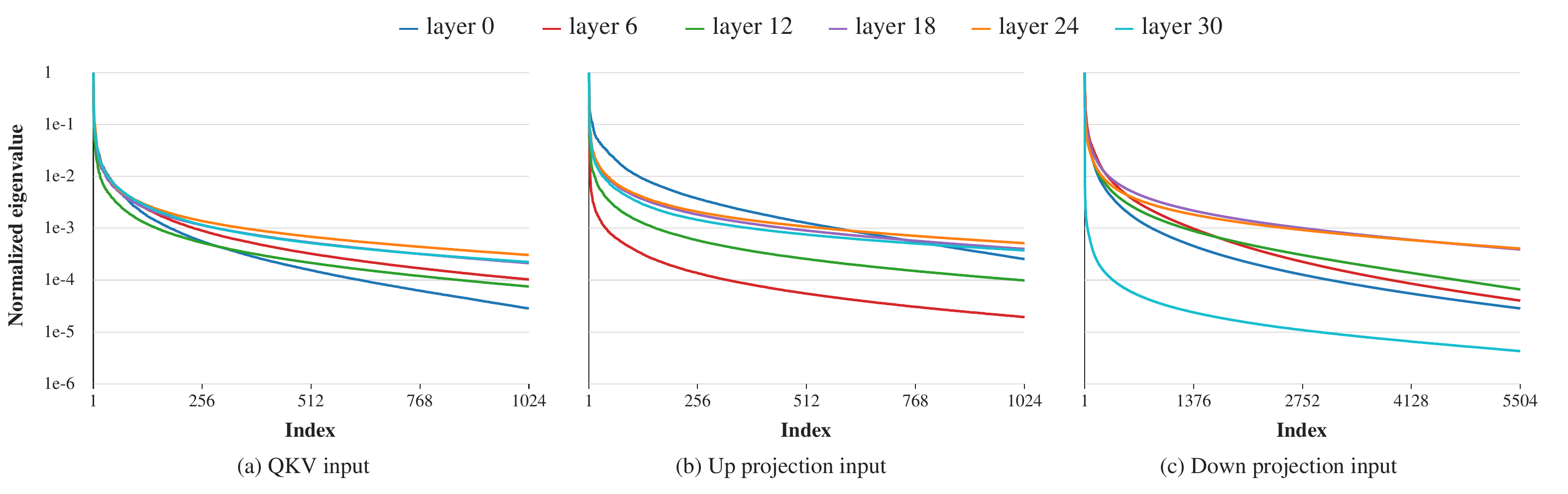}
\caption{Normalized eigenvalue spectra of $\mathbf{H}^{\top}\mathbf{H}$ across representative Qwen2.5-3B layers, shown on a log scale. 
}
\label{fig:normalized_activation_spectrum}
\vspace{-1em}
\end{figure}
\begin{figure}[h]
\centering
\includegraphics[width=\linewidth]{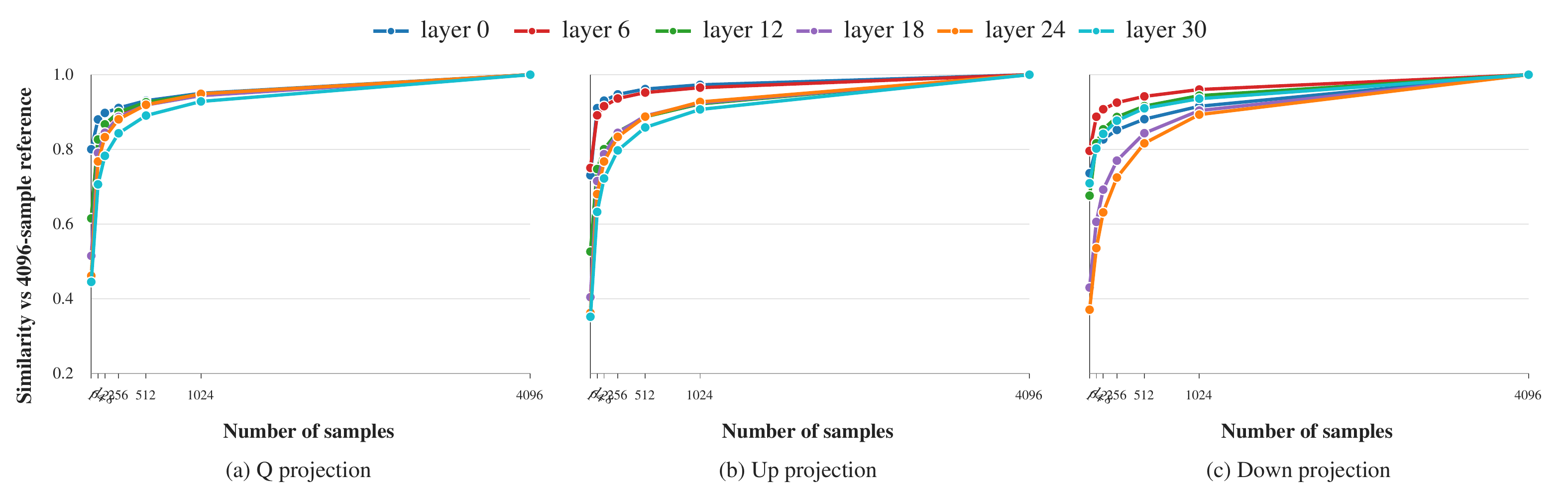}
\caption{Symmetric projection similarity between null spaces estimated using varying numbers of examples and the 4096-sample reference null space across representative Qwen2.5-3B layers.}
\label{fig:symmetric_projection_similarity_vs_samples_by_module}
\vspace{-1em}
\end{figure}

\textbf{Null-space Structure Analysis.} 
Since reasoning traces exhibit similar structural patterns~\citep{jiang2025makes,sun2026llm}, we hypothesize that the null-space structure
can be reliably estimated from a moderate number of examples. To verify this, we compare approximate null spaces estimated using different numbers of examples by measuring the principal angles between them. Specifically, we use the null space estimated from 4096 examples as a reference and evaluate its similarity to those estimated using fewer examples. Let $\mathcal{N}_1, \mathcal{N}_2 \subset \mathbb{R}^n$
denote two subspaces with dimensions $k_1 \leq k_2$. 
The principal angles are defined recursively, for $j=1,\ldots,k_1$, as
\begin{equation}
\theta_j =
\min_{\substack{
\mathbf{x}_j \in \mathcal{N}_1,\,
\mathbf{y}_j \in \mathcal{N}_2 \\
\mathbf{x}_j \perp \mathbf{x}_i^*,\,
\mathbf{y}_j \perp \mathbf{y}_i^*,\ \forall i<j
}}
\arccos\left(
\frac{\mathbf{x}_j^\top\mathbf{y}_j}
{\|\mathbf{x}_j\|\,\|\mathbf{y}_j\|}
\right),
\qquad j=1,\ldots,k_1,
\end{equation}
where $(\mathbf{x}_i^*, \mathbf{y}_i^*)$ denotes an optimizing pair for $\theta_i$. 
Smaller principal angles $\{\theta_1, \ldots, \theta_{k_1}\}$ indicate closer alignment between the two null spaces.
This recursive definition gives the \emph{geometric characterization} of principal angles. In practice, they can be computed more conveniently from the orthonormal bases of the two subspaces. Following~\citet{knyazev2012principal}, let $\mathbf{N}_1$ and $\mathbf{N}_2$ be orthonormal bases of $\mathcal{N}_1$ and $\mathcal{N}_2$, respectively:
\begin{equation}
    \mathbf{N}_1^\top \mathbf{N}_2
=
\mathbf{U}\,\mathrm{diag}(\sigma_1,\ldots,\sigma_{k_1})\,\mathbf{V}^\top,
\qquad
\theta_i = \arccos(\sigma_i), \quad i=1,\ldots,k_1.
\end{equation}
Appendix~\ref{app:principal_angles} provides a derivation connecting the definition of principal angles to this SVD-based computation.

In Figure~\ref{fig:symmetric_projection_similarity_vs_samples_by_module}, we plot the symmetric projection similarity, a normalized sum of the squared cosines of the principal angles between each estimated null space and the 4096-sample reference null space, defined as 
$\frac{1}{\sqrt{k_1 k_2}} \sum_{i=1}^{k_1} \cos^2 \theta_i.$ The similarity increases rapidly and largely stabilizes beyond 512 samples for most layers and modules, indicating that the estimated null-space structure becomes stable with a moderate number of examples.

\subsection{Proposed Null-Basis Reparameterization}

\Cref{sec:sub:subspace_large_enough} demonstrates the feasibility of using the null-space constraint to preserve reasoning capabilities. Building on this result, we construct a fixed null-space basis and
use it to reparameterize the low-rank update $\Delta\mathbf{W}$. This reparameterization structurally constrains every update to the estimated approximate null space, thereby limiting changes to the transformations of reasoning hidden states throughout fine-tuning.

Concretely, let $\mathbf{N}_s \in \mathbb{R}^{n \times k}$ denote the
orthonormal basis of the null space of $\mathbf{H}$ (i.e., $\mathbf{H}\mathbf{N}_s = \mathbf{0}$), where $k$ is the null-space dimension. We then parameterize the update as
\begin{equation}
\Delta \mathbf{W} = \mathbf{B}\mathbf{C}\mathbf{N}_s^{\top},
\label{eq:deltaw}
\end{equation}
with trainable matrices $\mathbf{B} \in \mathbb{R}^{m \times r}$ and $\mathbf{C} \in \mathbb{R}^{r \times k}$, where $r$ is a chosen adaptation rank. We thus have
\begin{equation}
\mathbf{H}\Delta \mathbf{W}^{\top}
= \mathbf{H}\mathbf{N}_s\mathbf{C}^{\top}\mathbf{B}^{\top}
= \mathbf{0}, \quad \forall ~ \mathbf{B}, \mathbf{C},
\end{equation}
which means the preservation constraint is satisfied by construction. We optimize the trainable factors $\mathbf{B}$ and $\mathbf{C}$ on the adaptation dataset $\mathcal{D}_a$ using the standard SFT objective. The overall procedure is summarized in \Cref{alg:proposed_method}.

The number of trainable parameters is $mr + rk$ for this reparameterization, which is smaller than the $mr + rn$ of standard LoRA as the null-space dimension $k < n$. As the preservation constraint in \eqref{eq:preserve} is enforced structurally through the parameterization, $\mathbf{B}$ and $\mathbf{C}$ are unconstrained parameters and can be updated by any standard optimizer (e.g., SGD~\citep{bottou2010large}, Adam~\citep{kingma2014adam}) without requiring any projection or constraint enforcement at each iteration.
In addition, since $\mathbf{N}_s$ has orthonormal columns, composing through it preserves the gradient Lipschitz constant of the loss, and our method enjoys the same $O(1/\log(T))$ convergence rate to a stationary point as standard LoRA under the same smoothness assumptions, where $T$ denotes the number of optimization steps~\citep{mu2026convergence}. We provide a formal statement in~\Cref{app:convergence}.

\begin{remark}
    Mathematically, one may simplify \eqref{eq:deltaw} into $\Delta \mathbf{W} = \widetilde{\mathbf{B}}\mathbf{N}_s^{\top}$ with $\widetilde{\mathbf{B}} \in \mathbb{R}^{m \times k}$. We keep the factorization $\mathbf{B}\mathbf{C}\mathbf{N}_s^{\top}$ because it provides a tunable bottleneck through rank $r$, which can substantially reduce trainable parameters when $k$ is large.
    This reparameterization can be interpreted as a constrained low-rank update. Standard LoRA sets $\Delta \mathbf{W} = \mathbf{B}\mathbf{A}$ with no restriction on $\mathbf{A}$. In contrast, our method sets
    $\mathbf{A} = \mathbf{C}\mathbf{N}_s^{\top},$ so that the row space of $\mathbf{A}$ is restricted to the null space of the reasoning-state matrix. The method therefore retains the parameter-efficiency benefits of low-rank adaptation while structurally limiting changes to layer outputs on reasoning-related hidden states.
\end{remark}

\begin{algorithm}[t]
\caption{Null-Basis Low-Rank Adaptation (NB-LoRA)\label{alg:proposed_method}}
\begin{algorithmic}[1]
\Require Post-RL model with target weight matrices $\{\mathbf{W}_\ell\}$; reasoning data $\mathcal{D}_r$; adaptation data $\mathcal{D}_a$; energy threshold $\epsilon$ (e.g., $0.99$); rank $r$
\State \textbf{Stage 1: Rollout}
\State Roll out the model on a subset of reasoning data $\mathcal{D}_r$ and collect input hidden states $\mathbf{H}_\ell$ for each target layer $\ell$ across all tokens
\State \textbf{Stage 2: Null-basis computation (offline, once)}
\For{each target layer $\ell$}
    \State Form $\mathbf{G}_\ell = \mathbf{H}_\ell^{\top}\mathbf{H}_\ell$ and compute its eigendecomposition
    \State Discard the leading eigenvectors covering $\epsilon$ of the spectral energy; stack the remaining eigenvectors as $\mathbf{N}_{s,\ell} \in \mathbb{R}^{n \times k_\ell}$
\EndFor
\State \textbf{Stage 3: Constrained adaptation}
\State Freeze $\mathbf{W}_\ell$ and $\mathbf{N}_{s,\ell}$; introduce trainable matrices $\mathbf{B}_\ell \in \mathbb{R}^{m \times r}$ and $\mathbf{C}_\ell \in \mathbb{R}^{r \times k_\ell}$, parameterizing $\Delta\mathbf{W}_\ell = \mathbf{B}_\ell\mathbf{C}_\ell\mathbf{N}_{s,\ell}^{\top}$
\State Optimize $\{\mathbf{B}_\ell, \mathbf{C}_\ell\}$ on $\mathcal{D}_a$ using the standard SFT objective
\Ensure Adapted model with $\mathbf{W}_\ell + \mathbf{B}_\ell\mathbf{C}_\ell\mathbf{N}_{s,\ell}^{\top}$ for each target layer $\ell$
\end{algorithmic}
\end{algorithm}

\section{Experiments}
We consider two reasoning models, Qwen2.5-3B and Qwen2.5-1.5B, which are trained via GRPO on MATH-lighteval~\citep{hendrycks2021measuring} and GSM8K~\citep{cobbe2021training}, respectively. In SFT stage, we adapt these models to a tool-use task~\citep{tang2023toolalpaca} and eight target tasks from the Commonsense Reasoning benchmark~\citep{hu2023llm}. 
After fine-tuning on the target tasks, we evaluate reasoning retention on both the anchor benchmark and several held-out benchmarks whose data are not used to construct the preservation constraint, including MATH-500, AGIEval-EN-MATH~\citep{zhong2024agieval}, and MMLU-STEM~\citep{hendrycks2020measuring}. More details about the hyperparameters and datasets are provided in~\Cref{app:appendix_experiment_details}.
\begin{table}[t]
\centering
\small
\caption{\textbf{Adaptation on new target tasks and retention on the reasoning benchmark.} Reasoning is evaluated on the benchmark used for RL post-training: MATH-lighteval for Qwen2.5-3B and GSM8K for Qwen2.5-1.5B. Our proposed method matches Vanilla LoRA on target task accuracy while retaining the reasoning performance.}
\label{tab:target_math_tradeoff_two_models}
\setlength{\tabcolsep}{2.2pt}
\resizebox{\textwidth}{!}{%
\begin{tabular}{l c c c c c c c c c c}
\toprule
\rowcolor{white}
\textbf{Method}
& \textbf{ARC-C}
& \textbf{ARC-E}
& \textbf{BoolQ}
& \textbf{HellaSwag}
& \textbf{OpenBookQA}
& \textbf{PIQA}
& \textbf{SocialIQA}
& \textbf{WinoGrande}
& \textbf{ToolUse}
& \textbf{Avg.} \\
\midrule
\rowcolor{white}
\multicolumn{1}{l}{\textbf{Target}} & \multicolumn{10}{c}{\textbf{Qwen2.5-3B}} \\
\midrule
\rowcolor{gray!10}
Vanilla LoRA & 0.824 & \textbf{0.948} & 0.715 & \textbf{0.947} & 0.868 & 0.861 & \textbf{0.814} & \textbf{0.857} & \textbf{0.790} & \textbf{0.847} \\
LoRA-Null    & 0.823 & 0.924 & 0.696 & 0.944 & 0.854 & 0.875 & 0.810 & 0.838 & 0.758 & 0.836 \\
\rowcolor{gray!10}
MiLoRA       & 0.773 & 0.926 & 0.720 & 0.946 & \textbf{0.884} & \textbf{0.878} & 0.754 & 0.839 & 0.710 & 0.826 \\
NSCL         & 0.794 & 0.937 & 0.706 & 0.945 & 0.880 & 0.870 & 0.812 & 0.830 & 0.774 & 0.839 \\
\rowcolor{gray!10}
Exp. Replay  & 0.815 & 0.935 & \textbf{0.722} & 0.944 & \textbf{0.884} & 0.868 & 0.810 & 0.849 & 0.774 & 0.845 \\
SDFT         & 0.643 & 0.856 & 0.601 & 0.702 & 0.728 & 0.730 & 0.681 & 0.698 & 0.645 & 0.698 \\
\rowcolor{row-highlight}
\textbf{Ours} & \textbf{0.826} & 0.941 & 0.713 & 0.945 & 0.882 & 0.873 & 0.811 & 0.847 & 0.774 & 0.846 \\
\midrule
\rowcolor{white}
\multicolumn{1}{l}{\textbf{Reasoning}} & \multicolumn{10}{c}{\textbf{Qwen2.5-3B} (Pre-SFT: $63.8\%$; MATH-lighteval)} \\
\midrule
\rowcolor{gray!10}
Vanilla LoRA & 0.089 & 0.060 & 0.013 & 0.245 & 0.064 & 0.000 & 0.162 & 0.033 & 0.582 & 0.139\com{0.183} \\
LoRA-Null    & 0.387 & 0.477 & 0.622 & 0.577 & 0.493 & 0.556 & 0.624 & 0.612 & 0.604 & 0.550\com{0.082} \\
\rowcolor{gray!10}
MiLoRA       & 0.160 & 0.103 & 0.357 & 0.000 & 0.102 & 0.204 & 0.133 & 0.095 & 0.272 & 0.158\com{0.106} \\
NSCL         & \textbf{0.637} & 0.635 & \textbf{0.637} & 0.635 & 0.635 & \textbf{0.638} & 0.638 & \textbf{0.631} & 0.614 & \textbf{0.633}\com{0.008} \\
\rowcolor{gray!10}
Exp. Replay  & 0.618 & 0.602 & 0.625 & 0.592 & 0.613 & 0.556 & 0.632 & 0.622 & \textbf{0.620} & 0.609\com{0.023} \\
SDFT         & 0.635 & 0.634 & 0.633 & 0.633 & \textbf{0.637} & 0.634 & \textbf{0.639} & \textbf{0.631} & 0.612 & 0.632\com{0.008} \\
\rowcolor{row-highlight}
\textbf{Ours} & 0.635 & \textbf{0.638} & 0.634 & \textbf{0.637} & 0.629 & 0.636 & 0.633 & \textbf{0.631} & 0.617 & 0.632\com{0.006} \\
\midrule
\rowcolor{white}
\multicolumn{1}{l}{\textbf{Target}} & \multicolumn{10}{c}{\textbf{Qwen2.5-1.5B}} \\
\midrule
\rowcolor{gray!10}
Vanilla LoRA & 0.753 & 0.906 & 0.686 & \textbf{0.918} & 0.842 & 0.852 & 0.777 & 0.785 & \textbf{0.661} & \textbf{0.798} \\
LoRA-Null    & 0.734 & 0.900 & \textbf{0.692} & 0.913 & 0.854 & \textbf{0.854} & \textbf{0.783} & 0.774 & 0.177 & 0.742 \\
\rowcolor{gray!10}
MiLoRA       & 0.753 & 0.908 & 0.690 & 0.913 & 0.836 & 0.845 & 0.780 & \textbf{0.789} & 0.000 & 0.724 \\
NSCL         & 0.695 & 0.893 & 0.689 & 0.908 & \textbf{0.856} & 0.843 & 0.774 & 0.765 & 0.065 & 0.721 \\
\rowcolor{gray!10}
Exp. Replay  & 0.758 & 0.902 & 0.689 & 0.914 & 0.844 & 0.836 & 0.753 & 0.783 & 0.000 & 0.720 \\
SDFT         & 0.066 & 0.103 & 0.065 & 0.021 & 0.058 & 0.403 & 0.014 & 0.183 & 0.161 & 0.119 \\
\rowcolor{row-highlight}
\textbf{Ours} & \textbf{0.771} & \textbf{0.912} & 0.683 & 0.914 & 0.846 & 0.843 & 0.767 & 0.763 & 0.629 & 0.792 \\
\midrule
\rowcolor{white}
\multicolumn{1}{l}{\textbf{Reasoning}} & \multicolumn{10}{c}{\textbf{Qwen2.5-1.5B} (Pre-SFT: $80.1\%$; GSM8K)} \\
\midrule
\rowcolor{gray!10}
Vanilla LoRA & 0.162 & 0.020 & 0.304 & 0.017 & 0.092 & 0.021 & 0.054 & 0.081 & 0.750 & 0.167\com{0.237} \\
LoRA-Null    & 0.435 & 0.657 & 0.666 & 0.718 & 0.606 & 0.661 & 0.544 & 0.608 & 0.508 & 0.600\com{0.090} \\
\rowcolor{gray!10}
MiLoRA       & 0.406 & 0.575 & 0.497 & 0.561 & 0.663 & 0.286 & 0.547 & 0.514 & 0.531 & 0.509\com{0.108} \\
NSCL         & \textbf{0.801} & 0.794 & 0.769 & 0.773 & 0.801 & 0.782 & 0.793 & 0.794 & 0.788 & 0.788\com{0.011} \\
\rowcolor{gray!10}
Exp. Replay  & 0.775 & 0.775 & 0.648 & 0.654 & 0.643 & 0.607 & 0.388 & 0.756 & 0.262 & 0.612\com{0.177} \\
SDFT         & 0.799 & 0.792 & 0.801 & 0.798 & \textbf{0.811} & \textbf{0.801} & \textbf{0.795} & 0.792 & 0.799 & 0.799\com{0.006} \\
\rowcolor{row-highlight}
\textbf{Ours} & 0.799 & \textbf{0.800} & \textbf{0.804} & \textbf{0.801} & 0.797 & 0.795 & 0.792 & \textbf{0.809} & \textbf{0.803} & \textbf{0.800}\com{0.005} \\
\bottomrule
\end{tabular}
}
\vspace{-1em}
\end{table}

\begin{table}[t]
\centering
\small
\caption{\textbf{Generalization of reasoning retention.}
Accuracy on five reasoning benchmarks before and after downstream fine-tuning.
\textbf{Post-RL} is the reference checkpoint before SFT. For each model, the preservation constraint is constructed
from a single anchor corpus: MATH-lighteval for Qwen2.5-3B and GSM8K for
Qwen2.5-1.5B. Performance on the remaining four benchmarks measures whether
reasoning preservation generalizes beyond the anchor corpus.}
\label{tab:reasoning_retention_two_models}
\setlength{\tabcolsep}{3pt}
\resizebox{\textwidth}{!}{%
\begin{tabular}{l c c c c c c c c}
\toprule
\rowcolor{white}
\multirow{2}{*}{\textbf{Benchmarks}}
& \multicolumn{1}{c}{\textbf{Before SFT}}
& \multicolumn{7}{c}{\textbf{After SFT}} \\
\cmidrule(lr){2-2}\cmidrule(lr){3-9}
\rowcolor{white}
& \textbf{Post-RL}
& \textbf{Vanilla LoRA}
& \textbf{LoRA-Null}
& \textbf{MiLoRA}
& \textbf{NSCL}
& \textbf{Exp. Replay}
& \textbf{SDFT}
& \cellcolor{row-highlight}\textbf{Ours} \\
\midrule
\rowcolor{white}
\multicolumn{9}{c}{\textbf{Qwen2.5-3B}} \\
\midrule
\rowcolor{gray!10}
MATH-lighteval   & 0.638 & 0.089 & 0.387 & 0.160 & \textbf{0.637} & 0.618 & 0.632 & \cellcolor{row-highlight}0.635 \\
MATH-500         & 0.624 & 0.052 & 0.428 & 0.150 & 0.622 & \textbf{0.626} & 0.608 & \cellcolor{row-highlight}0.622 \\
\rowcolor{gray!10}
GSM8K            & 0.842 & 0.000 & 0.034 & 0.007 & \textbf{0.844} & 0.562 & 0.784 & \cellcolor{row-highlight}0.843 \\
AGIEval-EN-MATH  & 0.620 & 0.051 & 0.371 & 0.156 & 0.611 & 0.603 & 0.607 & \cellcolor{row-highlight}\textbf{0.621} \\
\rowcolor{gray!10}
MMLU-STEM        & 0.659 & 0.420 & 0.487 & 0.528 & \textbf{0.636} & 0.476 & 0.611 & \cellcolor{row-highlight}\textbf{0.636} \\
\cmidrule(lr){1-9}
Average          & 0.677 & 0.122 & 0.341 & 0.200 & 0.670 & 0.577 & 0.648 & \cellcolor{row-highlight}\textbf{0.671} \\
\midrule
\rowcolor{white}
\multicolumn{9}{c}{\textbf{Qwen2.5-1.5B}} \\
\midrule
\rowcolor{gray!10}
MATH-lighteval   & 0.462 & 0.082 & 0.131 & 0.083 & 0.455 & 0.388 & 0.448 & \cellcolor{row-highlight} \textbf{0.467} \\
MATH-500         & 0.468 & 0.068 & 0.120 & 0.086 & 0.462 & 0.396 & 0.460 & \cellcolor{row-highlight}\textbf{0.468} \\
\rowcolor{gray!10}
GSM8K            & 0.801 & 0.162 & 0.435 & 0.406 & 0.798 & 0.775 & 0.793 & \cellcolor{row-highlight}\textbf{0.801} \\
AGIEval-EN-MATH  & 0.445 & 0.071 & 0.127 & 0.087 & 0.451 & 0.374 & 0.432 & \cellcolor{row-highlight}\textbf{0.453} \\
\rowcolor{gray!10}
MMLU-STEM        & 0.547 & 0.451 & 0.461 & 0.456 & \textbf{0.547} & 0.522 & 0.542 & \cellcolor{row-highlight}\textbf{0.547} \\
\cmidrule(lr){1-9}
Average          & 0.545 & 0.167 & 0.255 & 0.224 & 0.543 & 0.491 & 0.535 & \cellcolor{row-highlight}\textbf{0.547} \\
\bottomrule
\end{tabular}
}
\vspace{-1em}
\end{table}

\textbf{Baselines.} We compare our method against vanilla LoRA and the following baselines.
  \begin{itemize}[leftmargin=*, itemsep=2pt, topsep=2pt, parsep=0pt]
      \item LoRA-Null~\citep{tang2026put}: A null-space-based initialization baseline. For each target layer, it uses the null basis of the hidden state matrix to decompose the backbone weight into a frozen residual component and a null-space-projected component, then initializes the LoRA factors from the projected component.
      \item MiLoRA~\citep{wang2025milora}: Another SVD-based LoRA baseline that decomposes each pretrained weight into principal and minor components, freezes the principal component in the base model, and initializes the LoRA from the minor singular components.
      \item NSCL~\citep{wang2021training}: A null-space-based projection method that trains the full model, projecting each layer's full gradient onto the approximate null space of the previous task's hidden states before each update.
      \item Experience Replay~\citep{rolnick2019experience}: It employs standard LoRA and trains the LoRA modules on each task's SFT data mixed with a small replay buffer of reasoning traces.
      \item SDFT~\citep{shenfeld2026selfdistillation}: Conditions the model on downstream demonstrations to generate on-policy rollouts, which are subsequently used to fine-tune the model to reduce forgetting.
  \end{itemize}

\subsection{Main Results}\label{sec:main_results}
\textbf{Adaptation and reasoning retention.}
Table~\ref{tab:target_math_tradeoff_two_models} reports target-task accuracy alongside reasoning accuracy on the benchmark used for RL post-training.
Vanilla LoRA achieves strong accuracy on the target tasks, but the reasoning ability acquired through RL degrades sharply, dropping from $63.8\%$ to $8.3\%$ on Qwen2.5-3B and from $80.1\%$ to $9.4\%$ on Qwen2.5-1.5B.
LoRA-Null and MiLoRA mitigate this collapse, yet neither closes the gap. SDFT largely preserves reasoning performance but struggles to learn the target tasks, especially on Qwen2.5-1.5B.
NB-LoRA, by contrast, holds reasoning accuracy at essentially its pre-fine-tuning level on both models while matching Vanilla LoRA's target accuracy, and it does so uniformly: its per-task variation is among the smallest of these methods, whereas Experience Replay loses more than forty points on individual tasks. 
NSCL is the only baseline that achieves comparable reasoning retention without severely compromising target-task performance. However, NSCL updates the full model and projects the gradients at every optimization step, whereas NB-LoRA encodes the constraint directly in the adapter parameterization, requiring no gradient projection while keeping the backbone frozen. 

\textbf{Generalization of reasoning retention.}
Table~\ref{tab:reasoning_retention_two_models} examines whether reasoning
preservation generalizes beyond the anchor corpus used to construct the null
basis. Although the constraint is estimated solely from MATH-lighteval for
Qwen2.5-3B and GSM8K for Qwen2.5-1.5B, NB-LoRA preserves reasoning performance
across all five benchmarks, including the four unseen during constraint
construction. Its average accuracy remains close to the Post-RL reference:
$67.1\%$ versus $67.7\%$ on Qwen2.5-3B and $54.3\%$ versus $54.5\%$ on
Qwen2.5-1.5B. These results suggest that the null space estimated from one
reasoning corpus protects representations shared across reasoning benchmarks.
Among the baselines, only NSCL provides comparable cross-benchmark retention,
while NB-LoRA offers substantially greater efficiency, as discussed next.

\subsection{Efficiency Comparison}\label{sec:efficiency}
Among the compared methods, NSCL and NB-LoRA achieve similar adaptation performance and reasoning retention, but differ substantially in training efficiency.
\begin{wrapfigure}{r}{0.52\linewidth}
    \centering
    \vspace{-10pt}
    \includegraphics[width=\linewidth]{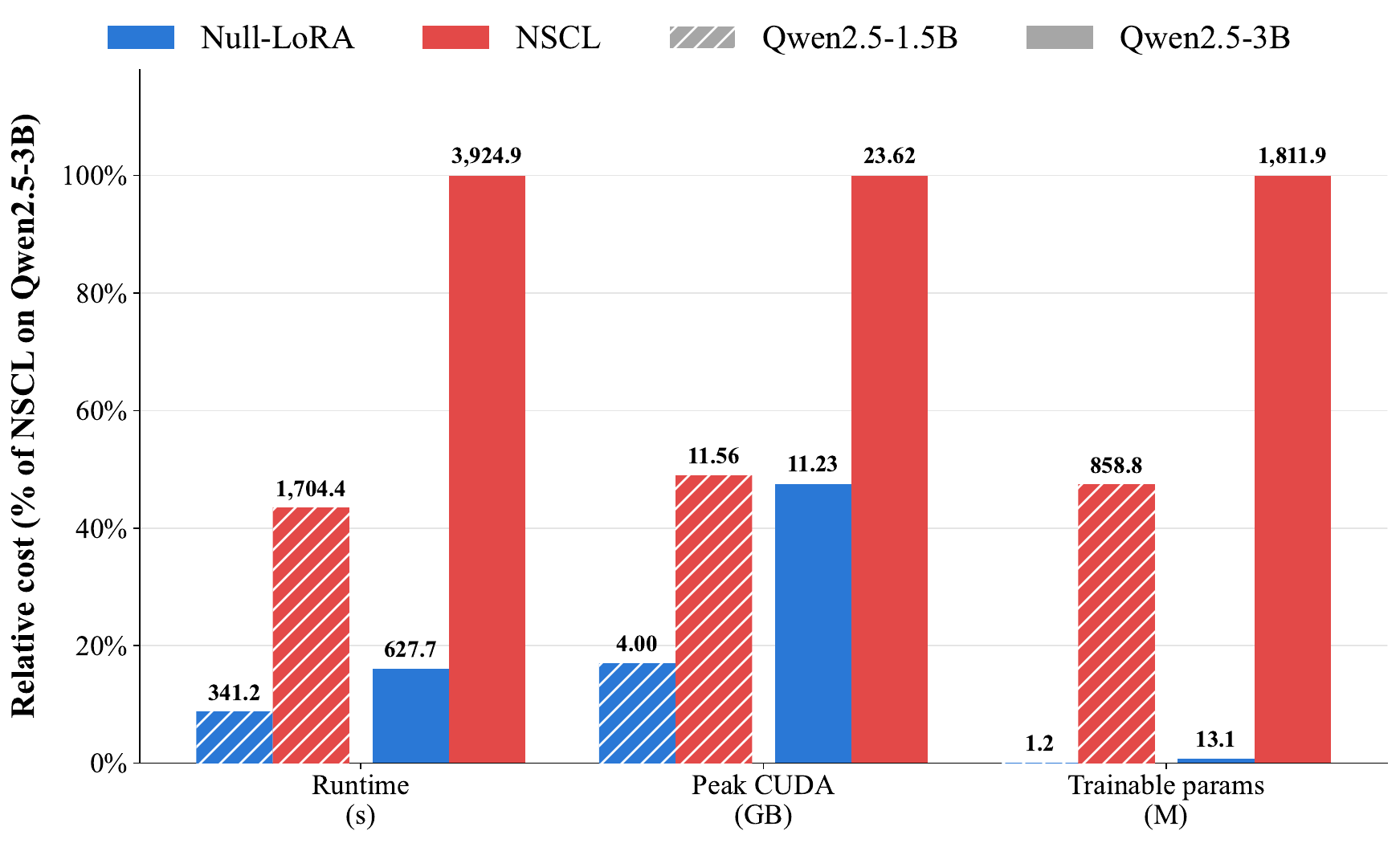}
    \caption{Training efficiency comparison between NB-LoRA and NSCL
    (3 epochs, batch size 1). Bars show runtime, peak CUDA memory, and trainable parameters relative to NSCL.} 
    \label{fig:efficiency_benchmark}
    \vspace{-10pt}
\end{wrapfigure}
As shown in~\Cref{fig:efficiency_benchmark}, NB-LoRA consistently reduces runtime, memory usage, and the number of trainable parameters across both model scales. On Qwen2.5-3B, it is $6.3\times$ faster and uses $2.1\times$ less peak CUDA memory than NSCL. The gap is structural: NSCL must keep full-model optimizer states together with a per-layer projector of size $O(d^2)$, while applying the projection incurs $O(d^3)$ cost per layer per optimizer step for hidden dimension $d$. In contrast, NB-LoRA enforces the null-space constraint directly through low-rank reparameterization, avoiding repeated projection during training. As model size increases, this structural difference leads to a widening efficiency gap, indicating better scalability of NB-LoRA to larger LLMs.


\subsection{Ablation Study}
\begin{figure}[t]
\centering
\begin{subfigure}[t]{0.32\linewidth}
\centering
\includegraphics[width=\linewidth]{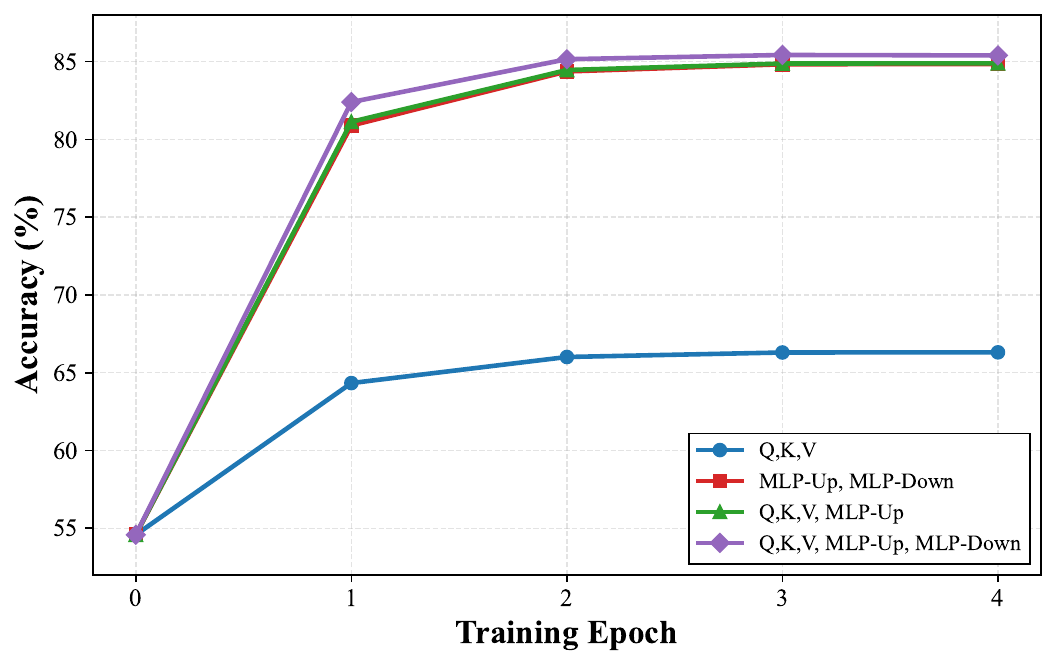}
\caption{Target-task convergence for different tunable modules on Qwen2.5-3B.}
\label{fig:ablation_module_selection}
\end{subfigure}
\hfill
\begin{subfigure}[t]{0.32\linewidth}
\centering
\includegraphics[width=\linewidth]{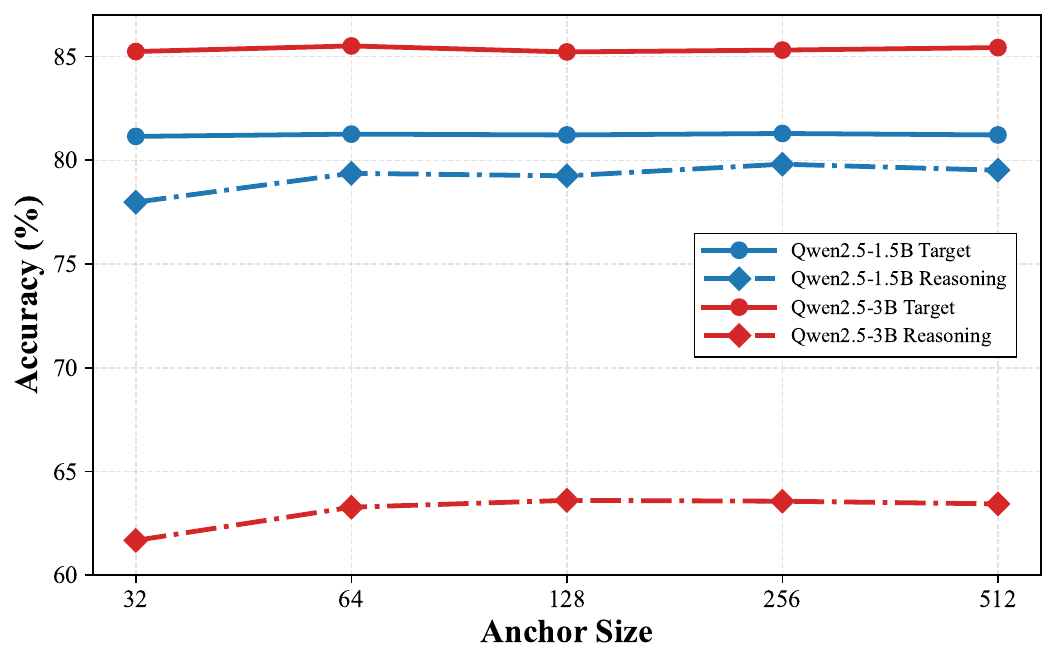}
\caption{Effect of the number of reasoning examples used to estimate the null basis.}
\label{fig:ablation_anchor_size}
\end{subfigure}
\hfill
\begin{subfigure}[t]{0.32\linewidth}
\centering
\includegraphics[width=\linewidth]{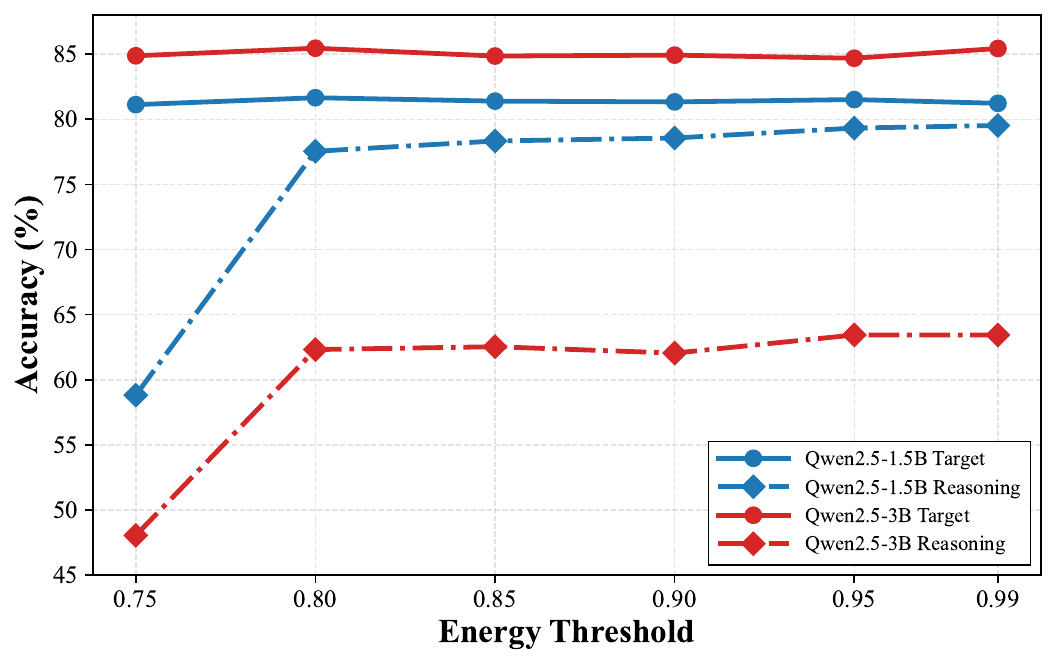}
\caption{Effect of the spectral-energy threshold used to define the approximate null space.}
\label{fig:ablation_energy_threshold}
\end{subfigure}
\caption{Ablations of the design choices of the proposed NB-LoRA. Target denotes average downstream accuracy over the eight commonsense reasoning adaptation tasks. Reason denotes average retained reasoning accuracy across the eight adapted checkpoints, evaluated on GSM8K for Qwen2.5-1.5B and MATH-lighteval for Qwen2.5-3B.}
\label{fig:ablation_overview}
\vspace{-1em}
\end{figure}

\textbf{Module selection.}
\Cref{fig:ablation_module_selection} compares four choices of tunable modules: only the attention Q/K/V projections, only the MLP up/down projections (i.e., the feed-forward module), Q/K/V plus the MLP up projection, and all five projections. Restricting NB-LoRA to Q/K/V substantially underfits the target tasks, whereas including MLP projections closes most of the gap. Tuning MLP up/down or Q/K/V+MLP-up achieves nearly the same target accuracy as tuning all five projections, which performs best overall. These results also reveal lower-cost alternatives to the full module set: Q/K/V+MLP-up avoids the higher-dimensional null basis required for MLP-down, while MLP-only tuning adapts fewer modules.

\textbf{Anchor data size.}
\Cref{fig:ablation_anchor_size} varies the number of reasoning examples used to estimate the null basis. Target-task accuracy remains nearly unchanged from 32 to 512 examples for both models, while reasoning retention is stable from 64 to 512 examples. Performance degrades mainly at 32 examples, particularly for Qwen2.5-3B, likely because the resulting null-space estimate is noisier. Together with the spectral analysis in Section~\ref{sec:sub:subspace_large_enough}, these results indicate that a moderate number of reasoning examples is sufficient to capture a stable null space, an important property for the practical use of NB-LoRA.

\textbf{Energy threshold.} 
\Cref{fig:ablation_energy_threshold} varies the spectral-energy threshold used to define the approximate null space. Across those thresholds, target-task accuracy remains stable for both models, which can be attributed to the null space being large enough to accommodate adaptation even under stricter preservation as we showed in Section~\ref{sec:sub:subspace_large_enough}.
Reasoning retention is robust for thresholds from 0.80 to 0.99, but drops sharply at 0.75. This suggests that NB-LoRA is insensitive to the threshold over a broad range, with degradation occurring only when the constraint becomes too permissive and more reasoning directions enter the trainable subspace. 
Such robustness provides flexibility in threshold selection, with higher thresholds favoring stronger preservation and moderately lower thresholds potentially allowing a larger trainable subspace when needed.


Detailed per-task results for the module-selection, anchor-size, and
energy-threshold ablations are provided in
\Cref{app:module_ablation_details,app:anchor_size_ablation_details,app:energy_threshold_ablation_details},
respectively.

\section{Conclusion}

We study post-RL adaptation: how to efficiently learn new tasks without erasing reasoning abilities acquired through RL. We find that standard LoRA can substantially degrade these abilities. Our analysis shows that reasoning activations concentrate in low-dimensional subspaces, leaving substantial null-space capacity for adaptation. This observation motivates Null-Basis LoRA (NB-LoRA), which constrains low-rank updates to an approximate null space of reasoning activations. The constraint is maintained throughout fine-tuning without replay or per-step gradient projection. We further show that NB-LoRA preserves the relevant smoothness constants and inherits standard LoRA's $O(1/\log(T))$ convergence guarantee under the same assumptions. Across two RL-trained models and diverse downstream tasks, NB-LoRA matches standard LoRA's adaptation performance while retaining reasoning accuracy near its pre-fine-tuning level. Its preservation generalizes to unseen reasoning benchmarks, and its training is more computationally and memory efficient than full-model null-space projection. Overall, our results demonstrate that NB-LoRA provides an efficient approach for adapting post-RL models to new tasks while preserving their reasoning abilities.

\bibliography{content/references,content/related_works_refs_to_check}
\bibliographystyle{iclr2025_conference}

\newpage
\appendix
\BeforeBeginEnvironment{figure}{\FloatBarrier}
\BeforeBeginEnvironment{table}{\FloatBarrier}
\clearpage

\begin{center}
    {\bf\Large Appendix}
\end{center}

\startcontents[sections]
\printcontents[sections]{l}{1}{\setcounter{tocdepth}{3}}

\clearpage

\section{Derivation of the SVD-Based Principal Angle Computation}\label{app:principal_angles}
For completeness, we derive the relationship between principal angles and the singular values of the cross-basis matrix~\citep{horn2012matrix,knyazev2012principal}.

Let $\mathcal{N}_1,\mathcal{N}_2 \subset \mathbb{R}^n$ be two subspaces with dimensions $k_1$ and $k_2$, where $k_1 \le k_2$, and let $\mathbf{N}_1 \in \mathbb{R}^{n \times k_1}$ and $\mathbf{N}_2 \in \mathbb{R}^{n \times k_2}$ be orthonormal bases of these subspaces. We show that the recursive definition of principal angles yields
\begin{equation}    
    \cos\theta_i=\sigma_i(\mathbf N_1^\top\mathbf N_2),\qquad i=1,\ldots,k_1.
\end{equation}

For the first principal angle, any vectors $\mathbf{x} \in \mathcal{N}_1$ and $\mathbf{y} \in \mathcal{N}_2$ can be written as
\begin{equation}
\mathbf{x} = \mathbf{N}_1 \mathbf{a}, \qquad \mathbf{y} = \mathbf{N}_2 \mathbf{b},
\end{equation}
for coefficient vectors $\mathbf{a} \in \mathbb{R}^{k_1}$ and $\mathbf{b} \in \mathbb{R}^{k_2}$. Since $\mathbf{N}_1$ and $\mathbf{N}_2$ have orthonormal columns, we have
\begin{equation}
\|\mathbf{x}\|_2 = \|\mathbf{a}\|_2, \qquad \|\mathbf{y}\|_2 = \|\mathbf{b}\|_2.
\end{equation}
Therefore, the first principal angle satisfies
\begin{equation}
\cos \theta_1
= \max_{\substack{\mathbf{x} \in \mathcal{N}_1,\ \mathbf{y} \in \mathcal{N}_2 \\
\|\mathbf{x}\|_2 = \|\mathbf{y}\|_2 = 1}}
\mathbf{x}^\top \mathbf{y}
= \max_{\substack{\mathbf{a} \in \mathbb{R}^{k_1},\ \mathbf{b} \in \mathbb{R}^{k_2} \\
\|\mathbf{a}\|_2 = \|\mathbf{b}\|_2 = 1}}
\mathbf{a}^\top \mathbf{N}_1^\top \mathbf{N}_2 \mathbf{b}.
\end{equation}
By the variational characterization of the singular values, this maximum is exactly the largest singular value of $\mathbf{N}_1^\top \mathbf{N}_2$. Hence,
\begin{equation}
\cos \theta_1 = \sigma_1(\mathbf{N}_1^\top \mathbf{N}_2).
\end{equation}

Now consider the singular value decomposition
\begin{equation}
\mathbf{N}_1^\top \mathbf{N}_2 = \mathbf{U}\mathbf{\Sigma}\mathbf{V}^\top,
\end{equation}
where $\mathbf{U} \in \mathbb{R}^{k_1 \times k_1}$ and $\mathbf{V} \in \mathbb{R}^{k_2 \times k_2}$ are orthogonal, and
\begin{equation}
\mathbf{\Sigma} =
\begin{bmatrix}
\operatorname{diag}(\sigma_1,\ldots,\sigma_{k_1}) & \mathbf{0}
\end{bmatrix}.
\end{equation}
Let $\mathbf{u}_i$ and $\mathbf{v}_i$ denote the left and right singular vectors associated with $\sigma_i$, and define
\begin{equation}
\mathbf{x}_i = \mathbf{N}_1 \mathbf{u}_i, \qquad \mathbf{y}_i = \mathbf{N}_2 \mathbf{v}_i.
\end{equation}
Because $\mathbf{N}_1$ and $\mathbf{N}_2$ have orthonormal columns and the singular vectors are orthonormal, the vectors $\mathbf{x}_1,\ldots,\mathbf{x}_{k_1}$ are orthonormal in $\mathcal{N}_1$, and $\mathbf{y}_1,\ldots,\mathbf{y}_{k_1}$ are orthonormal in $\mathcal{N}_2$. Moreover,
\begin{equation}
\mathbf{x}_i^\top \mathbf{y}_j
= \mathbf{u}_i^\top \mathbf{N}_1^\top \mathbf{N}_2 \mathbf{v}_j
= \mathbf{u}_i^\top \mathbf{U}\mathbf{\Sigma}\mathbf{V}^\top \mathbf{v}_j
= \sigma_j \mathbf{u}_i^\top \mathbf{u}_j.
\end{equation}
Thus, $\mathbf{x}_i^\top \mathbf{y}_i = \sigma_i$, and for $i \neq j$ these pairs satisfy the orthogonality constraints required by the recursive definition of principal angles. Applying the same maximization argument on the orthogonal complements after removing the first $i-1$ principal vector pairs yields
\begin{equation}
\cos \theta_i = \sigma_i, \qquad i = 1,\ldots,k_1.
\end{equation}
Therefore, the principal angles between $\mathcal{N}_1$ and $\mathcal{N}_2$ are given by
\begin{equation}
    \theta_i=\arccos \sigma_i(\mathbf N_1^\top\mathbf N_2),\qquad i=1,\ldots,k_1.
\end{equation}

\section{Convergence Analysis of the Proposed Algorithm}\label{app:convergence}
This appendix shows that the null-space reparameterization does not weaken the convergence guarantee for standard LoRA gradient descent established by \citet{mu2026convergence}. The key observation is that right-multiplication by $\mathbf{N}_s^{\top}$, where $\mathbf{N}_s$ has orthonormal columns, is an isometry under the Frobenius norm. Consequently, composing the training loss with this map preserves the smoothness and lower-bound assumptions required by their analysis. We formalize this observation in a bridging lemma and then apply their result to NB-LoRA.

Let $\mathbf{W} \in \mathbb{R}^{m \times n}$ be a frozen backbone weight and let $f: \mathbb{R}^{m \times n} \to \mathbb{R}$ denote the SFT loss on $\mathcal{D}_a$ as a function of its update $\Delta\mathbf{W}$, with the remaining model parameters fixed. Standard LoRA parameterizes this update as $\Delta\mathbf{W}=\mathbf{B}\mathbf{A}$, whereas NB-LoRA minimizes
\begin{equation}
g(\mathbf{B}, \mathbf{C}) := f\big(\mathbf{B}\mathbf{C}\mathbf{N}_s^{\top}\big)
\quad \text{over} \quad \mathbf{B} \in \mathbb{R}^{m \times r},\ \mathbf{C} \in \mathbb{R}^{r \times k},
\label{eq:app_null_lora_obj}
\end{equation}
where $\mathbf{N}_s \in \mathbb{R}^{n \times k}$ is the fixed null basis of \eqref{eq:deltaw}. Since $\mathbf{N}_s$ is an
orthonormal basis of the null space of $\mathbf{H}$, we have $\mathbf{N}_s^{\top}\mathbf{N}_s = \mathbf{I}_k$. We analyze the gradient descent updates
\begin{equation}
\mathbf{B}_{t+1} = \mathbf{B}_t - \eta_t \nabla_{\mathbf{B}}\, g(\mathbf{B}_t, \mathbf{C}_t),
\qquad
\mathbf{C}_{t+1} = \mathbf{C}_t - \eta_t \nabla_{\mathbf{C}}\, g(\mathbf{B}_t, \mathbf{C}_t),
\label{eq:app_gd_updates}
\end{equation}
where $\eta_t$ is the learning rate at step $t$.

Following standard convergence analyses of gradient descent methods~\citep{bottou2018optimization,bubeck2015convex,mu2026convergence}, we assume that the loss $f$ is differentiable and satisfies the standard smoothness and lower-boundedness conditions. Note that both are stated for the loss as a function of the \emph{full} weight update: no smoothness of the reparameterized objective $g$ and no boundedness of the trainable factors is assumed.

\begin{assumption}[$L$-smoothness]\label{ass:smooth}
The loss $f$ is differentiable, and there exists a constant $L \geq 1$ such that for all $\mathbf{X}, \mathbf{Y} \in \mathbb{R}^{m \times n}$,
\begin{equation}
\big\|\nabla f(\mathbf{X}) - \nabla f(\mathbf{Y})\big\|_F \leq L \big\|\mathbf{X} - \mathbf{Y}\big\|_F.
\end{equation}
\end{assumption}

\begin{assumption}[Lower boundedness]\label{ass:lower_bounded}
There exists $f^{*}$ such that $f(\mathbf{X}) \geq f^{*}$ for all $\mathbf{X} \in \mathbb{R}^{m \times n}$.
\end{assumption}

The following lemma bridges the two settings by showing that composition with the fixed basis $\mathbf{N}_s$ preserves both assumptions with the same constants.

\begin{lemma}[Null-basis composition preserves smoothness]\label{lem:smoothness_bridge}
Define $\tilde{f}: \mathbb{R}^{m \times k} \to \mathbb{R}$ by $\tilde{f}(\mathbf{M}) := f\big(\mathbf{M}\mathbf{N}_s^{\top}\big)$. Under \Cref{ass:smooth,ass:lower_bounded}, $\tilde{f}$ is $L$-smooth with the same constant $L$ and is lower bounded by the same constant $f^{*}$.
\end{lemma}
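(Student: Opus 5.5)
The plan is to treat $\tilde f$ as the composition of $f$ with the linear map $\mathcal{T}:\mathbb{R}^{m\times k}\to\mathbb{R}^{m\times n}$, $\mathcal{T}(\mathbf{M})=\mathbf{M}\mathbf{N}_s^{\top}$. We compute the gradient of $\tilde f$ with the chain rule and then use $\mathbf{N}_s^{\top}\mathbf{N}_s=\mathbf{I}_k$ to bound the relevant Frobenius norms.

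Lower boundedness is immediate. For every $\mathbf{M}$, the matrix $\mathbf{M}\mathbf{N}_s^{\top}$ lies in $\mathbb{R}^{m\times n}$, so \Cref{ass:lower_bounded} gives $\tilde f(\mathbf{M})=f(\mathbf{M}\mathbf{N}_s^{\top})\ge f^{*}$.

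For smoothness, first note that $\tilde f$ is differentiable as the composition of a differentiable $f$ with a linear map. Its gradient comes from the directional derivative. For a direction $\mathbf{E}\in\mathbb{R}^{m\times k}$,
\[
D\tilde f(\mathbf{M})[\mathbf{E}]=\langle \nabla f(\mathbf{M}\mathbf{N}_s^{\top}),\mathbf{E}\mathbf{N}_s^{\top}\rangle_F=\operatorname{tr}\big(\nabla f(\mathbf{M}\mathbf{N}_s^{\top})^{\top}\mathbf{E}\mathbf{N}_s^{\top}\big)=\langle \nabla f(\mathbf{M}\mathbf{N}_s^{\top})\mathbf{N}_s,\mathbf{E}\rangle_F,
\]
and therefore $\nabla\tilde f(\mathbf{M})=\nabla f(\mathbf{M}\mathbf{N}_s^{\top})\,\mathbf{N}_s$.

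Next, fix $\mathbf{M}_1,\mathbf{M}_2$ and write $\mathbf{G}=\nabla f(\mathbf{M}_1\mathbf{N}_s^{\top})-\nabla f(\mathbf{M}_2\mathbf{N}_s^{\top})$. Then
\[
\|\nabla\tilde f(\mathbf{M}_1)-\nabla\tilde f(\mathbf{M}_2)\|_F=\|\mathbf{G}\mathbf{N}_s\|_F\le\|\mathbf{G}\|_F\le L\,\|(\mathbf{M}_1-\mathbf{M}_2)\mathbf{N}_s^{\top}\|_F=L\,\|\mathbf{M}_1-\mathbf{M}_2\|_F .
\]
The two norm facts used here are the main point of the proof.

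The first is $\|\mathbf{G}\mathbf{N}_s\|_F\le\|\mathbf{G}\|_F$. Since $\mathbf{N}_s\mathbf{N}_s^{\top}$ is an orthogonal projector $\mathbf{P}$,
\[
\|\mathbf{G}\mathbf{N}_s\|_F^2=\operatorname{tr}(\mathbf{G}\mathbf{P}\mathbf{G}^{\top})=\|\mathbf{G}\mathbf{P}\|_F^2\le\|\mathbf{G}\|_F^2 .
\]

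The second is $\|\mathbf{X}\mathbf{N}_s^{\top}\|_F=\|\mathbf{X}\|_F$. This is an equality, because
\[
\|\mathbf{X}\mathbf{N}_s^{\top}\|_F^2=\operatorname{tr}(\mathbf{X}\mathbf{N}_s^{\top}\mathbf{N}_s\mathbf{X}^{\top})=\|\mathbf{X}\|_F^2 .
\]

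Together these show that $\tilde f$ is $L$-smooth with the same constant $L$.

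The only delicate step is the direction of the inequalities: right-multiplying by $\mathbf{N}_s^{\top}$ preserves norms exactly, while right-multiplying by $\mathbf{N}_s$ can only contract them. Both facts follow from $\mathbf{N}_s^{\top}\mathbf{N}_s=\mathbf{I}_k$.
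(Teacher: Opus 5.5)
Your proof is correct and follows the paper's argument essentially step for step: the chain rule gives $\nabla\tilde f(\mathbf{M})=\nabla f(\mathbf{M}\mathbf{N}_s^{\top})\mathbf{N}_s$, right-multiplication by $\mathbf{N}_s$ can only shrink the Frobenius norm, and right-multiplication by $\mathbf{N}_s^{\top}$ preserves it exactly. The differences are minor: you derive the gradient explicitly from the directional derivative, and you justify the contraction via the orthogonal projector $\mathbf{N}_s\mathbf{N}_s^{\top}$, whereas the paper uses $\|\mathbf{Y}\mathbf{N}_s\|_F\le\|\mathbf{Y}\|_F\,\|\mathbf{N}_s\|_2$ with $\|\mathbf{N}_s\|_2=1$.
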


\begin{proof}
By the chain rule, $\nabla \tilde{f}(\mathbf{M}) = \nabla f\big(\mathbf{M}\mathbf{N}_s^{\top}\big)\mathbf{N}_s$. Since $\mathbf{N}_s^{\top}\mathbf{N}_s = \mathbf{I}_k$, right-multiplication by $\mathbf{N}_s^{\top}$ is a Frobenius isometry, $\|\mathbf{X}\mathbf{N}_s^{\top}\|_F^2 = \operatorname{Tr}(\mathbf{X}\mathbf{N}_s^{\top}\mathbf{N}_s\mathbf{X}^{\top}) = \|\mathbf{X}\|_F^2$ for all $\mathbf{X} \in \mathbb{R}^{m \times k}$, and the spectral norm of $\mathbf{N}_s$ equals one. Hence, for any $\mathbf{M}_1, \mathbf{M}_2 \in \mathbb{R}^{m \times k}$,
\begin{align}
\big\|\nabla \tilde{f}(\mathbf{M}_1) - \nabla \tilde{f}(\mathbf{M}_2)\big\|_F
&= \Big\|\big[\nabla f\big(\mathbf{M}_1\mathbf{N}_s^{\top}\big) - \nabla f\big(\mathbf{M}_2\mathbf{N}_s^{\top}\big)\big]\mathbf{N}_s\Big\|_F \nonumber\\
&\leq \big\|\nabla f\big(\mathbf{M}_1\mathbf{N}_s^{\top}\big) - \nabla f\big(\mathbf{M}_2\mathbf{N}_s^{\top}\big)\big\|_F \nonumber\\
&\leq L\big\|(\mathbf{M}_1 - \mathbf{M}_2)\mathbf{N}_s^{\top}\big\|_F
= L\big\|\mathbf{M}_1 - \mathbf{M}_2\big\|_F,
\end{align}
where the first inequality uses $\|\mathbf{Y}\mathbf{N}_s\|_F \leq \|\mathbf{Y}\|_F\,\|\mathbf{N}_s\|_2 = \|\mathbf{Y}\|_F$, the second uses \Cref{ass:smooth}, and the final equality uses the isometry property. Lower boundedness is immediate: $\tilde{f}(\mathbf{M}) = f(\mathbf{M}\mathbf{N}_s^{\top}) \geq f^{*}$.
\end{proof}

By \eqref{eq:app_null_lora_obj}, $g(\mathbf{B}, \mathbf{C}) = \tilde{f}(\mathbf{B}\mathbf{C})$, so the updates \eqref{eq:app_gd_updates} are precisely the original LoRA gradient descent algorithm applied to the loss $\tilde{f}$, with input dimension $k$ in place of $n$. By \Cref{lem:smoothness_bridge}, $\tilde{f}$ satisfies the assumptions of \citet{mu2026convergence} with unchanged constants, their main convergence result thus applies directly to NB-LoRA.

\begin{theorem}[Convergence of NB-LoRA; Theorem~3.5 of \citealp{mu2026convergence}]\label{thm:null_lora_convergence}
Suppose \Cref{ass:smooth,ass:lower_bounded} hold. Run $T$ steps of the updates \eqref{eq:app_gd_updates} with the learning rate
\begin{equation}
\eta_t = \min\left\{\frac{1}{4\sqrt{2}\,L\big(\|\mathbf{V}_t\|_F^2 + \|\nabla \tilde{f}(\mathbf{B}_t\mathbf{C}_t)\|_F\big)},\ 1\right\},
\qquad
\mathbf{V}_t = \begin{bmatrix}\mathbf{B}_t \\ \mathbf{C}_t^{\top}\end{bmatrix} \in \mathbb{R}^{(m+k) \times r}.
\label{eq:app_stepsize}
\end{equation}
Then
\begin{equation}
\min_{t = 0, \ldots, T-1}\ \big\|\nabla g(\mathbf{B}_t, \mathbf{C}_t)\big\|_F^2 = O\!\left(\frac{1}{\log T}\right),
\end{equation}
where the $O(\cdot)$ notation hides dependence on $\mathbf{V}_0$, $g(\mathbf{B}_0, \mathbf{C}_0)$, $f^{*}$, and $L$. Moreover, if there exists a constant $C_V > 0$ such that $\|\mathbf{V}_t\|_F \leq C_V$ for all $t$, the rate improves to $O(1/T)$.
\end{theorem}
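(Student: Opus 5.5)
The plan is a reduction rather than a new descent argument. By \eqref{eq:app_null_lora_obj}, $g(\mathbf{B},\mathbf{C}) = \tilde{f}(\mathbf{B}\mathbf{C})$, where $\tilde f(\mathbf{M}) = f(\mathbf{M}\mathbf{N}_s^{\top})$ is defined on $\mathbb{R}^{m\times k}$. So NB-LoRA is exactly standard LoRA with frozen backbone, factors $\mathbf{B}\in\mathbb{R}^{m\times r}$ and $\mathbf{A}=\mathbf{C}\in\mathbb{R}^{r\times k}$, and loss $\tilde f$. The only change is that the input dimension $n$ is replaced by $k$.

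The proof then proceeds in three steps.

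\textbf{Step 1.} Check that the hypotheses of Theorem~3.5 of \citet{mu2026convergence} hold for $\tilde f$. These are $L$-smoothness with $L\ge 1$ and lower boundedness of the loss as a function of the full update matrix. \Cref{lem:smoothness_bridge} gives both for $\tilde f$ with the same constants $L$ and $f^*$.

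\textbf{Step 2.} Check that the algorithm is literally theirs. I will compute the factor gradients by the chain rule:
\begin{equation}
\nabla_{\mathbf{B}} g = \nabla\tilde f(\mathbf{B}\mathbf{C})\,\mathbf{C}^{\top},
\qquad
\nabla_{\mathbf{C}} g = \mathbf{B}^{\top}\nabla\tilde f(\mathbf{B}\mathbf{C}).
\end{equation}
These coincide with the LoRA gradients for loss $\tilde f$ at $(\mathbf{B},\mathbf{A})=(\mathbf{B},\mathbf{C})$. Hence the updates \eqref{eq:app_gd_updates} are exactly their gradient descent iterates. The stepsize \eqref{eq:app_stepsize} is also their adaptive rule, with $\mathbf{V}_t$ stacking $\mathbf{B}_t$ and $\mathbf{A}_t^{\top}=\mathbf{C}_t^{\top}$ and with $\nabla f$ replaced by $\nabla\tilde f$.

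\textbf{Step 3.} Invoke their theorem. It yields $\min_{t<T}\|\nabla g(\mathbf{B}_t,\mathbf{C}_t)\|_F^2=O(1/\log T)$, where the hidden constants depend on $\mathbf{V}_0$, $g(\mathbf{B}_0,\mathbf{C}_0)=\tilde f(\mathbf{B}_0\mathbf{C}_0)$, $f^*$ and $L$. It also gives the improved $O(1/T)$ rate under a uniform bound $\|\mathbf{V}_t\|_F\le C_V$. Any dependence of their constants on the ambient dimensions would now be on $(m,k)$ rather than $(m,n)$. Since $k\le n$, this does not worsen the bound; the $O(\cdot)$ as stated hides no dimension anyway.

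The main obstacle is Step 2: confirming that nothing in the cited analysis relies on structure of $f$ beyond the two assumptions. Their result must be a theorem about an arbitrary $L$-smooth, lower-bounded function of the product $\mathbf{B}\mathbf{A}$, not about a specific network loss. It must also need no extra property, such as bounded factors or a Lipschitz loss, that the composition with $\mathbf{N}_s^{\top}$ could break. I will reread their assumption list and check that it matches \Cref{ass:smooth,ass:lower_bounded} verbatim. If some extra condition appears, such as a bounded gradient, I will transfer it through the same isometry argument used in \Cref{lem:smoothness_bridge}, using $\|\nabla\tilde f(\mathbf{M})\|_F\le\|\nabla f(\mathbf{M}\mathbf{N}_s^{\top})\|_F$.
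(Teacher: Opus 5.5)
Your proposal is correct and follows the paper's argument: the paper also notes that $g(\mathbf{B},\mathbf{C})=\tilde f(\mathbf{B}\mathbf{C})$, uses \Cref{lem:smoothness_bridge} to carry the $L$-smoothness and lower-boundedness assumptions over to $\tilde f$ with the same constants, and then applies Theorem~3.5 of the cited work directly, with $k$ replacing $n$ as the input dimension. Your chain-rule computation of the factor gradients only writes out the ``same algorithm'' step that the paper states in one line.
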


\begin{remark}[Same guarantee as standard LoRA]
\Cref{lem:smoothness_bridge} shows that the smoothness constant is preserved \emph{exactly}: right-composition with $\mathbf{N}_s^{\top}$ is an isometric embedding of $\mathbb{R}^{m \times k}$ into $\mathbb{R}^{m \times n}$. Consequently, NB-LoRA satisfies the same assumptions and inherits the same convergence rate as standard LoRA.
Overall, the null-space reparameterization reduces the input dimension from $n$ to $k$ without weakening the convergence guarantee.
\end{remark}

\section{Further Experiments}

\subsection{Detailed Ablation Results}\label{app:ablation_details}

This appendix provides detailed per-task results for the ablation studies in~\Cref{fig:ablation_overview}. Target denotes downstream-task accuracy at the post-SFT checkpoint, while Reason denotes reasoning accuracy on GSM8K for Qwen2.5-1.5B and MATH-lighteval for Qwen2.5-3B. All values are percentages.

\subsubsection{Per-Task Results of the Module-Selection Ablation}\label{app:module_ablation_details}
Table~\ref{tab:module_ablation_details} reports the per-task results underlying the module-selection ablation in \Cref{fig:ablation_module_selection}. The detailed breakdown reinforces the main-text observation that target-task adaptation relies strongly on the MLP projections. Tuning only Q/K/V underfits severely on several tasks (e.g., on Qwen2.5-3B, ARC-Easy drops from $93.69\%$ under the full module set to $44.87\%$, and OpenBookQA from $88.80\%$ to $46.80\%$), whereas configurations including MLP projections recover most of the performance of the full module set. In particular, MLP-only and Q/K/V+MLP-Up achieve average target accuracy within one point of the full configuration for both models, supporting their use as lower-cost alternatives when reducing the number or dimensionality of adapted modules is desirable. Meanwhile, reasoning accuracy remains stable across the four module choices, indicating that the preservation behavior of NB-LoRA is largely insensitive to which modules are adapted.

\begin{table}[t]
\centering
\scriptsize
\caption{\textbf{Per-task results of the module-selection ablation in~\Cref{fig:ablation_module_selection}.} Each column group corresponds to one choice of tunable modules: Q, K, and V are the query, key, and value projections of the attention layers, and MLP-Up/MLP-Down are the up- and down-projections of the feed-forward block. Target and Reason denote target-task and reasoning accuracy (\%) at the post-SFT checkpoint.}
\label{tab:module_ablation_details}
\setlength{\tabcolsep}{3pt}
\resizebox{\textwidth}{!}{%
\begin{tabular}{l cc cc cc cc}
\toprule
\multirow{2}{*}{\textbf{Task}}
& \multicolumn{2}{c}{\textbf{Q,K,V}}
& \multicolumn{2}{c}{\textbf{Q,K,V, MLP-Up}}
& \multicolumn{2}{c}{\textbf{MLP-Up, MLP-Down}}
& \multicolumn{2}{c}{\textbf{Q,K,V, MLP-Up, MLP-Down}} \\
\cmidrule(lr){2-3}\cmidrule(lr){4-5}\cmidrule(lr){6-7}\cmidrule(lr){8-9}
& \textbf{Target} & \textbf{Reason}
& \textbf{Target} & \textbf{Reason}
& \textbf{Target} & \textbf{Reason}
& \textbf{Target} & \textbf{Reason} \\
\midrule
\multicolumn{9}{c}{\textbf{Qwen2.5-1.5B} (Reasoning task: GSM8K)} \\
\midrule
ARC-Challenge & 45.90 & 79.91 & \textcolor{black}{72.35} & 79.98 & \textcolor{black}{74.74} & 80.14 & \textbf{77.05} & 79.91 \\
ARC-Easy      & 79.76 & 79.30 & 89.02 & 79.91 & 90.45 & 78.70 & \textbf{91.20} & 79.98 \\
BoolQ         & 65.60 & 79.61 & 69.60 & 79.98 & \textbf{69.88} & 79.98 & 68.31 & 79.53 \\
HellaSwag     & 90.26 & 79.61 & \textbf{91.58} & 78.62 & 91.23 & 79.45 & 91.37 & 79.83 \\
OpenBookQA    & 68.60 & 79.68 & 84.20 & 79.23 & \textbf{85.00} & 79.53 & 84.60 & 78.62 \\
PIQA          & 82.59 & 78.77 & 83.41 & 79.76 & 83.46 & 79.23 & \textbf{84.30} & 79.53 \\
SocialIQA     & \textbf{77.43} & 79.61 & 77.28 & 78.77 & 76.87 & 78.70 & 76.73 & 78.77 \\
WinoGrande    & 73.40 & 79.53 & \textbf{78.22} & 78.70 & 75.37 & 79.38 & 76.19 & 79.98 \\
\midrule
Average       & 72.94 & 79.50 & 80.71 & 79.37 & 80.88 & 79.39 & \textbf{81.22} & 79.52 \\
\midrule
\multicolumn{9}{c}{\textbf{Qwen2.5-3B} (Reasoning task: MATH-lighteval)} \\
\midrule
ARC-Challenge & 54.35 & 63.28 & \textcolor{black}{79.22} & 63.32 & \textcolor{black}{81.83} & 63.64 & \textbf{82.56} & 63.50 \\
ARC-Easy      & 44.87 & 63.28 & 92.34 & 63.74 & 93.39 & 63.60 & \textbf{93.69} & 63.16 \\
BoolQ         & 66.06 & 63.80 & 70.18 & 63.68 & 71.04 & 63.34 & \textbf{71.28} & 63.40 \\
HellaSwag     & 93.24 & 63.00 & \textbf{94.78} & 63.56 & 94.60 & 63.10 & 94.24 & 63.54 \\
OpenBookQA    & 46.80 & 63.46 & \textbf{89.00} & 63.38 & 87.40 & 62.90 & 88.80 & 63.88 \\
PIQA          & 76.28 & 63.18 & 87.21 & 63.04 & \textbf{87.65} & 63.36 & 87.16 & 63.60 \\
SocialIQA     & 67.96 & 63.70 & \textbf{81.78} & 64.36 & 79.99 & 63.02 & 81.11 & 63.32 \\
WinoGrande    & 80.82 & 63.44 & 84.53 & 63.26 & 82.72 & 63.34 & \textbf{84.61} & 63.10 \\
\midrule
Average       & 66.30 & 63.39 & 84.88 & 63.54 & 84.83 & 63.29 & \textbf{85.43} & 63.44 \\
\bottomrule
\end{tabular}%
}
\end{table}

\subsubsection{Per-Task Results of the Anchor-Size Ablation}\label{app:anchor_size_ablation_details}
Table~\ref{tab:anchor_size_ablation_details} reports the per-task results underlying the anchor-size ablation in \Cref{fig:ablation_anchor_size}, where the number of reasoning examples used to estimate the hidden-state null basis varies from \textcolor{black}{32} to 512. 
Consistent with Figure 5b, target-task accuracy is largely insensitive to anchor size: the average target accuracy varies by at most 0.3 points for both models, and no individual task exhibits a systematic trend. Reasoning retention is similarly stable once the anchor set contains at least 64 examples, with reasoning accuracy remaining close to the pre-fine-tuning level across adapted checkpoints. The clearest degradation occurs at the smallest anchor set of 32 examples: Qwen2.5-1.5B loses roughly one to two points of reasoning accuracy on most tasks, while Qwen2.5-3B drops sharply on WinoGrande to $49.20\%$, likely reflecting a noisier estimate of the reasoning subspace from too few examples. These per-task results further support the main-text conclusion that a moderate anchor set, around 64 examples in our experiments, is sufficient to obtain a stable null-space estimate, making NB-LoRA practical even with limited anchor data.

\begin{table}[t]
\centering
\scriptsize
\caption{\textbf{Per-task results of the anchor-size ablation in \Cref{fig:ablation_anchor_size}.} Each column group corresponds to the number of reasoning examples used to estimate the null basis. Target and Reason denote target-task and reasoning accuracy (\%) at the post-SFT checkpoint. Target accuracy remains nearly unchanged across anchor sizes, while reasoning retention is stable for 64 or more examples and degrades mainly at the smallest anchor set.}
\label{tab:anchor_size_ablation_details}
\setlength{\tabcolsep}{2.5pt}
\resizebox{\textwidth}{!}{%
\begin{tabular}{l cc cc cc cc cc}
\toprule
\multirow{2}{*}{\textbf{Task}}
& \multicolumn{2}{c}{\textbf{32}}
& \multicolumn{2}{c}{\textbf{64}}
& \multicolumn{2}{c}{\textbf{128}}
& \multicolumn{2}{c}{\textbf{256}}
& \multicolumn{2}{c}{\textbf{512}} \\
\cmidrule(lr){2-3}\cmidrule(lr){4-5}\cmidrule(lr){6-7}\cmidrule(lr){8-9}\cmidrule(lr){10-11}
& \textbf{Target} & \textbf{Reason}
& \textbf{Target} & \textbf{Reason}
& \textbf{Target} & \textbf{Reason}
& \textbf{Target} & \textbf{Reason}
& \textbf{Target} & \textbf{Reason} \\
\midrule
\multicolumn{11}{c}{\textbf{Qwen2.5-1.5B} (Reasoning task: GSM8K)} \\
\midrule
ARC-Challenge & 75.60 & \textcolor{black}{77.68} & 75.43 & 79.61 & 76.45 & 79.83 & \textcolor{black}{76.93} & 80.97 & 77.05 & 79.91 \\
ARC-Easy      & 90.49 & \textcolor{black}{78.21} & 90.45 & 79.68 & 90.82 & 79.76 & 89.98 & 79.76 & 91.20 & 79.98 \\
BoolQ         & 68.62 & \textcolor{black}{77.00} & 69.24 & 79.08 & 69.69 & 79.45 & 68.29 & 79.30 & 68.31 & 79.53 \\
HellaSwag     & 91.26 & \textcolor{black}{78.32} & 91.47 & 79.08 & 91.10 & 78.39 & 91.22 & 80.67 & 91.37 & 79.83 \\
OpenBookQA    & 85.20 & \textcolor{black}{78.29} & 84.60 & 79.98 & 84.00 & 79.76 & 86.80 & 79.61 & 84.60 & 78.62 \\
PIQA          & 83.03 & \textcolor{black}{77.61} & 84.77 & 78.92 & 84.06 & 78.77 & 84.11 & 79.68 & 84.30 & 79.53 \\
SocialIQA     & 76.71 & 78.85 & 76.87 & 79.98 & 77.33 & 78.77 & 76.92 & 79.98 & 76.73 & 78.77 \\
WinoGrande    & 78.30 & 77.71 & 77.27 & 78.62 & 76.32 & 79.23 & 76.09 & 78.47 & 76.19 & 79.98 \\
\midrule
Average       & 81.15 & 77.96 & 81.26 & 79.37 & 81.22 & 79.25 & 81.29 & 79.81 & 81.22 & 79.52 \\
\midrule
\multicolumn{11}{c}{\textbf{Qwen2.5-3B} (Reasoning task: MATH-lighteval)} \\
\midrule
ARC-Challenge & 82.85 & 63.84 & 82.60 & 63.82 & 82.60 & 63.52 & 82.53 & 63.56 & 82.56 & 63.50 \\
ARC-Easy      & 93.73 & 63.72 & 93.52 & 63.38 & 94.28 & 63.18 & \textcolor{black}{93.50} & 64.26 & 93.69 & 63.16 \\
BoolQ         & 71.07 & 63.72 & \textcolor{black}{71.40} & 63.78 & \textcolor{black}{71.39} & 63.96 & 71.41 & 63.70 & 71.28 & 63.40 \\
HellaSwag     & 94.04 & 63.96 & 94.65 & 63.12 & 94.46 & 63.58 & 94.29 & 63.76 & 94.24 & 63.54 \\
OpenBookQA    & 88.80 & 63.20 & 88.20 & 63.52 & 87.80 & 63.96 & 88.40 & 63.34 & 88.80 & 63.88 \\
PIQA          & 86.94 & 62.82 & 87.38 & 63.30 & 87.76 & 63.34 & 86.89 & 63.64 & 87.16 & 63.60 \\
SocialIQA     & \textcolor{black}{81.32} & 62.94 & \textcolor{black}{81.38} & 63.16 & 80.40 & 63.64 & 81.32 & 63.16 & 81.11 & 63.32 \\
WinoGrande    & \textcolor{black}{83.19} & \textcolor{black}{49.20} & 84.93 & 62.14 & 83.03 & 63.68 & \textcolor{black}{84.11} & 63.14 & 84.61 & 63.10 \\
\midrule
Average       & 85.24 & 61.68 & 85.51 & 63.28 & 85.22 & 63.61 & 85.31 & 63.57 & 85.43 & 63.44 \\
\bottomrule
\end{tabular}%
}
\end{table}

\subsubsection{Per-Task Results of the Energy-Threshold Ablation}\label{app:energy_threshold_ablation_details}
Table~\ref{tab:energy_threshold_ablation_details} reports the per-task results underlying the energy-threshold ablation in \Cref{fig:ablation_energy_threshold}, where the spectral-energy threshold defining the approximate null space varies from 0.75 to 0.99. The breakdown mirrors the findings in \Cref{fig:ablation_energy_threshold}. Target-task accuracy is stable across the entire range: raising the threshold, and thus shrinking the trainable subspace, does not degrade adaptation on any task, in line with the spectral analysis in Section~\ref{sec:sub:subspace_large_enough}, which shows that the null space remains large even at a 0.99 energy threshold. 
Reasoning retention is robust for thresholds from 0.80 to 0.99 but collapses at 0.75, and the per-task view reveals that this collapse is task-dependent rather than uniform: on Qwen2.5-1.5B, adaptation on HellaSwag, PIQA, and SocialIQA reduces reasoning accuracy to $31.54\%$, $40.64\%$, and $28.89\%$, respectively, whereas ARC and OpenBookQA leave it largely intact. The same tasks produce the largest degradations on Qwen2.5-3B, including $19.12\%$ after HellaSwag and $10.92\%$ after SocialIQA. 
This pattern is consistent with a permissive threshold admitting more reasoning-relevant directions into the trainable subspace, with the resulting degradation depending on the target task being adapted. 
Finally, within the robust range, retention on Qwen2.5-1.5B improves mildly as the threshold increases (from $77.54\%$ at 0.80 to $79.52\%$ at 0.99 on average). This wide robust range suggests some flexibility in threshold selection: if a target task may benefit from a larger update subspace, moderately lowering the threshold (e.g., to 0.95 or 0.90) could provide additional adaptation capacity with only a marginal reduction in reasoning retention.

\begin{table}[t]
\centering
\scriptsize
\caption{\textbf{Per-task results of the energy-threshold ablation in ~\Cref{fig:ablation_energy_threshold}.} Each column group corresponds to one spectral-energy threshold used to define the approximate null space. Target and Reason denote target-task and reasoning accuracy (\%) at the post-SFT checkpoint. Target accuracy remains stable across thresholds, while reasoning retention is robust from 0.80 to 0.99 and degrades sharply at 0.75 for checkpoints adapted to several target tasks.}
\label{tab:energy_threshold_ablation_details}
\setlength{\tabcolsep}{2pt}
\resizebox{\textwidth}{!}{%
\begin{tabular}{l cc cc cc cc cc cc}
\toprule
\multirow{2}{*}{\textbf{Task}}
& \multicolumn{2}{c}{\textbf{0.75}}
& \multicolumn{2}{c}{\textbf{0.80}}
& \multicolumn{2}{c}{\textbf{0.85}}
& \multicolumn{2}{c}{\textbf{0.90}}
& \multicolumn{2}{c}{\textbf{0.95}}
& \multicolumn{2}{c}{\textbf{0.99}} \\
\cmidrule(lr){2-3}\cmidrule(lr){4-5}\cmidrule(lr){6-7}\cmidrule(lr){8-9}\cmidrule(lr){10-11}\cmidrule(lr){12-13}
& \textbf{Target} & \textbf{Reason}
& \textbf{Target} & \textbf{Reason}
& \textbf{Target} & \textbf{Reason}
& \textbf{Target} & \textbf{Reason}
& \textbf{Target} & \textbf{Reason}
& \textbf{Target} & \textbf{Reason} \\
\midrule
\multicolumn{13}{c}{\textbf{Qwen2.5-1.5B} (Reasoning task: GSM8K)} \\
\midrule
ARC-Challenge & 77.13 & 78.54 & 78.07 & 80.36 & 77.99 & 80.44 & 75.60 & 80.14 & 76.54 & 79.98 & 77.05 & 79.91 \\
ARC-Easy      & 90.24 & 75.28 & 91.04 & 79.76 & 91.04 & 79.53 & 91.04 & 80.29 & 90.99 & 79.53 & 91.20 & 79.98 \\
BoolQ         & 68.10 & 61.03 & 69.33 & 70.89 & 68.72 & 75.51 & 68.69 & 72.63 & 70.09 & 78.47 & 68.31 & 79.53 \\
HellaSwag     & 90.85 & 31.54 & 90.98 & 74.68 & 90.68 & 75.51 & 91.04 & 79.45 & 90.97 & 79.45 & 91.37 & 79.83 \\
OpenBookQA    & 84.00 & 77.33 & 86.20 & 79.68 & 85.20 & 79.38 & 85.60 & 80.36 & 84.60 & 79.91 & 84.60 & 78.62 \\
PIQA          & 85.15 & 40.64 & 84.06 & 78.70 & 83.62 & 78.85 & 83.68 & 78.70 & 84.71 & 78.54 & 84.30 & 79.53 \\
SocialIQA     & 76.92 & 28.89 & 76.31 & 79.15 & 76.61 & 78.32 & 76.87 & 78.47 & 75.90 & 79.38 & 76.73 & 78.77 \\
WinoGrande    & 76.56 & 77.33 & 77.19 & 77.10 & 77.27 & 79.08 & 78.14 & 78.47 & 78.30 & 79.30 & 76.19 & 79.98 \\
\midrule
Average       & 81.12 & 58.82 & 81.65 & 77.54 & 81.39 & 78.33 & 81.33 & 78.56 & 81.51 & 79.32 & 81.22 & 79.52 \\
\midrule
\multicolumn{13}{c}{\textbf{Qwen2.5-3B} (Reasoning task: MATH-lighteval)} \\
\midrule
ARC-Challenge & 83.02 & 63.06 & 82.51 & 63.14 & 83.62 & 63.32 & 82.42 & 63.20 & \textcolor{black}{80.38} & 63.58 & 82.56 & 63.50 \\
ARC-Easy      & 93.94 & 58.80 & 94.15 & 62.98 & 92.85 & 63.76 & 93.90 & 63.86 & 93.01 & 63.62 & 93.69 & 63.16 \\
BoolQ         & 71.01 & 58.68 & 72.14 & 61.74 & 71.68 & 62.08 & 71.41 & 62.54 & 71.19 & 64.22 & 71.28 & 63.40 \\
HellaSwag     & 92.10 & 19.12 & 94.03 & 62.26 & 93.29 & 57.28 & 93.42 & 53.68 & 93.88 & 61.84 & 94.24 & 63.54 \\
OpenBookQA    & 88.00 & 55.56 & 89.00 & 61.28 & 87.20 & 63.16 & 87.60 & 63.66 & 88.00 & 63.62 & 88.80 & 63.88 \\
\textcolor{black}{PIQA}          & 87.00 & 54.90 & 86.06 & 61.76  & 86.34 & 63.38 & 86.07 & 63.02 & 86.34 & 63.58 & 87.16 & 63.60 \\
\textcolor{black}{SocialIQA}     & 81.27 & 10.92 & 81.17 & 62.04 & 81.10 & 63.58 & 81.05 & 63.16 & 81.07 & 63.82 & 81.11 & 63.32 \\
WinoGrande    & 82.64 & 63.42 & 84.53 & 63.38 & 82.72 & 63.76 & 83.50 & 63.28 & 83.58 & 63.24 & 84.61 & 63.10 \\
\midrule
Average       & 84.87 & 48.06 & 85.45 & 62.32 & 84.85 & 62.54 & 84.92 & 62.05 & 84.68 & 63.44 & 85.43 & 63.44 \\
\bottomrule
\end{tabular}%
}
\end{table}

\subsection{Null-Space Capacity for Sequential Multi-Task Adaptation}
\label{app:appendix_sequential_adaptation}

The experiments in \Cref{sec:main_results} demonstrate that NB-LoRA preserves reasoning during adaptation to an individual downstream task. We next investigate whether the null space provides sufficient capacity for multiple downstream tasks. Starting from each post-RL checkpoint, we first jointly adapt the model to eight commonsense tasks using NB-LoRA. We then merge the learned update, recompute the null basis using examples representing both the original reasoning capability and the newly acquired commonsense capabilities, and sequentially adapt the resulting model to tool use. This setting tests whether the null space can support cumulative capability acquisition rather than only a single adaptation.

\begin{table}[H]
    \centering
    \caption{\textbf{Accuracy across sequential adaptation stages.} Reasoning, commonsense, and tool-use accuracy at the original reasoning-tuned checkpoint, after eight-task commonsense fine-tuning, and after sequential tool-use fine-tuning. Both adaptation stages use NB-LoRA. FT denotes fine-tuning.}
    \label{tab:sequential_adaptation_accuracy}
    \small
    \setlength{\tabcolsep}{4pt}
    \begin{tabularx}{\textwidth}{@{}lXccc@{}}
        \toprule
        Model & Stage & Reasoning acc. & Commonsense acc. & Tool-use acc. \\
        \midrule
        \multirow{3}{*}{Qwen2.5-1.5B GSM8K}
            & Prior to commonsense FT & $79.30\%$ & $33.30\%$ & $14.52\%$ \\
            & After commonsense FT    & $79.61\%$ & $79.52\%$ & $6.45\%$  \\
            & After tool-use FT       & $79.83\%$ & $79.72\%$ & $72.58\%$ \\
        \midrule
        \multirow{3}{*}{Qwen2.5-3B MATH}
            & Prior to commonsense FT & $63.64\%$ & $54.58\%$ & $11.29\%$ \\
            & After commonsense FT    & $63.78\%$ & $84.62\%$ & $35.48\%$ \\
            & After tool-use FT       & $63.86\%$ & $84.68\%$ & $74.19\%$ \\
        \bottomrule
    \end{tabularx}
\end{table}

As shown in \Cref{tab:sequential_adaptation_accuracy}, the models successfully accumulate new capabilities across the two adaptation stages. Commonsense accuracy increases from $33.30\%$ to $79.52\%$ for Qwen2.5-1.5B and from $54.58\%$ to $84.62\%$ for Qwen2.5-3B. Subsequent tool-use adaptation raises tool-use accuracy to $72.58\%$ and $74.19\%$, respectively, while preserving the acquired commonsense performance. Reasoning accuracy also remains near its post-RL level throughout the sequence. These results demonstrate that the approximate null space can accommodate multiple heterogeneous downstream tasks while preserving both the original reasoning ability and capabilities acquired during earlier adaptation stages.

\begin{figure}[H]
    \centering
    \includegraphics[width=\linewidth]{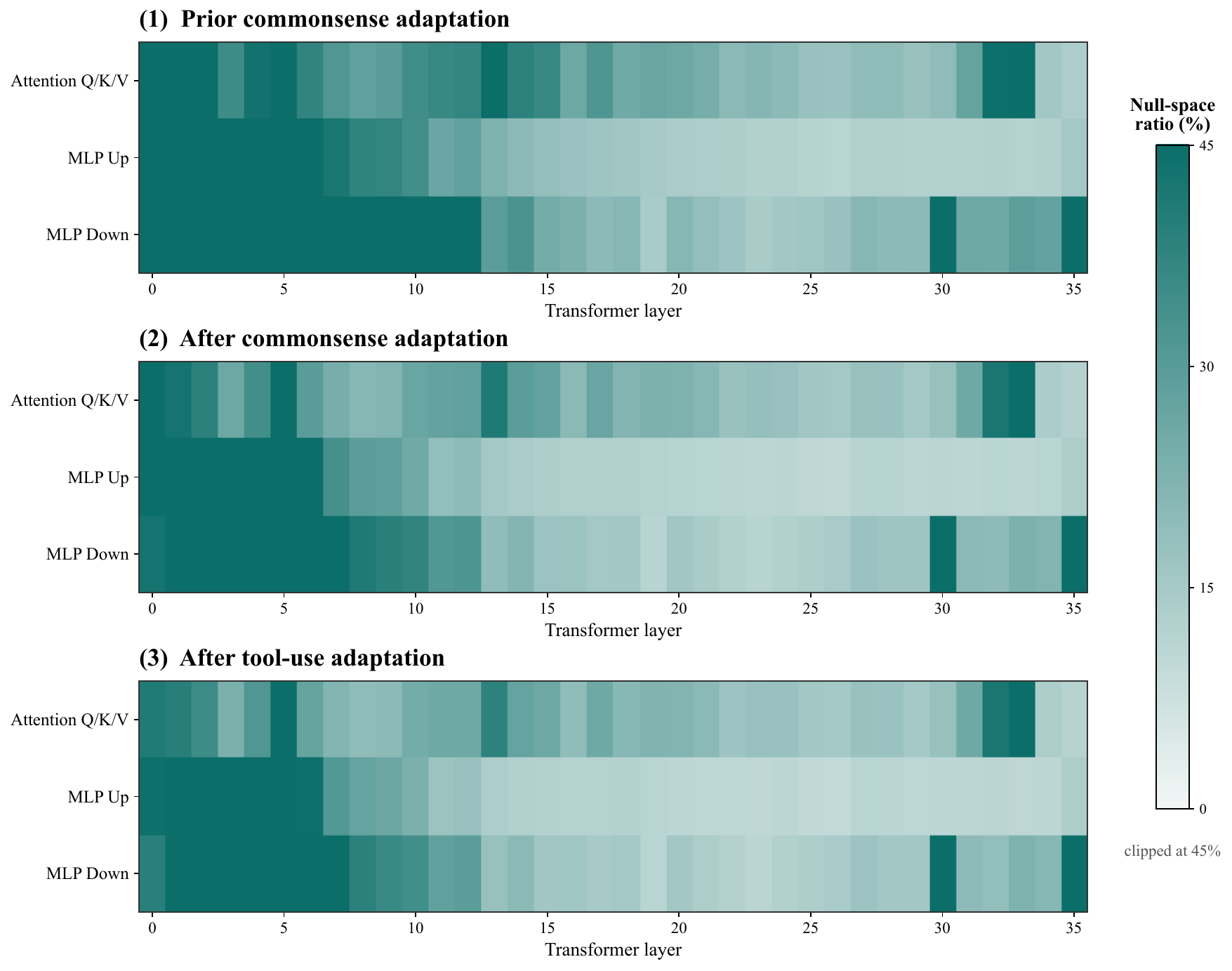}
    \caption[Layer-wise null-space ratios show that sequential adaptation reduces but does not eliminate low-energy capacity; remaining capacity is broadly distributed, with a bottleneck in middle MLP down-projection layers.]{\textbf{Layer-wise null-space capacity across adaptation stages.} Heatmaps show the null-space ratio for attention projections, MLP up-projections, and MLP down-projections across transformer layers at three stages of sequential adaptation. All panels share the same color scale, clipped at 45\% for visibility. Although the retained null space contracts across the adaptation stages, it remains broadly distributed across layers.}
    \label{fig:sequential_null_space_heatmap}
\end{figure}
As the protected capability set expands, its activation subspace occupies more directions and the corresponding approximate null space gradually contracts (see \Cref{fig:sequential_null_space_capacity}). Nevertheless, after adaptation to the eight commonsense tasks and tool use, the Qwen2.5-3B model retains approximately \textbf{$25.0\%$} null-space capacity in attention projections, \textbf{$23.0\%$} in MLP up-projections, and \textbf{$33.4\%$} in MLP down-projections. The layer-wise heatmaps in \Cref{fig:sequential_null_space_heatmap} further show that the remaining capacity is broadly distributed across the network. Together, these results indicate that sequential adaptation with NB-LoRA does not exhaust the available null space, leaving a substantial subspace for further adaptation.

\begin{figure}[H]
    \centering
    \includegraphics[width=0.7\linewidth]{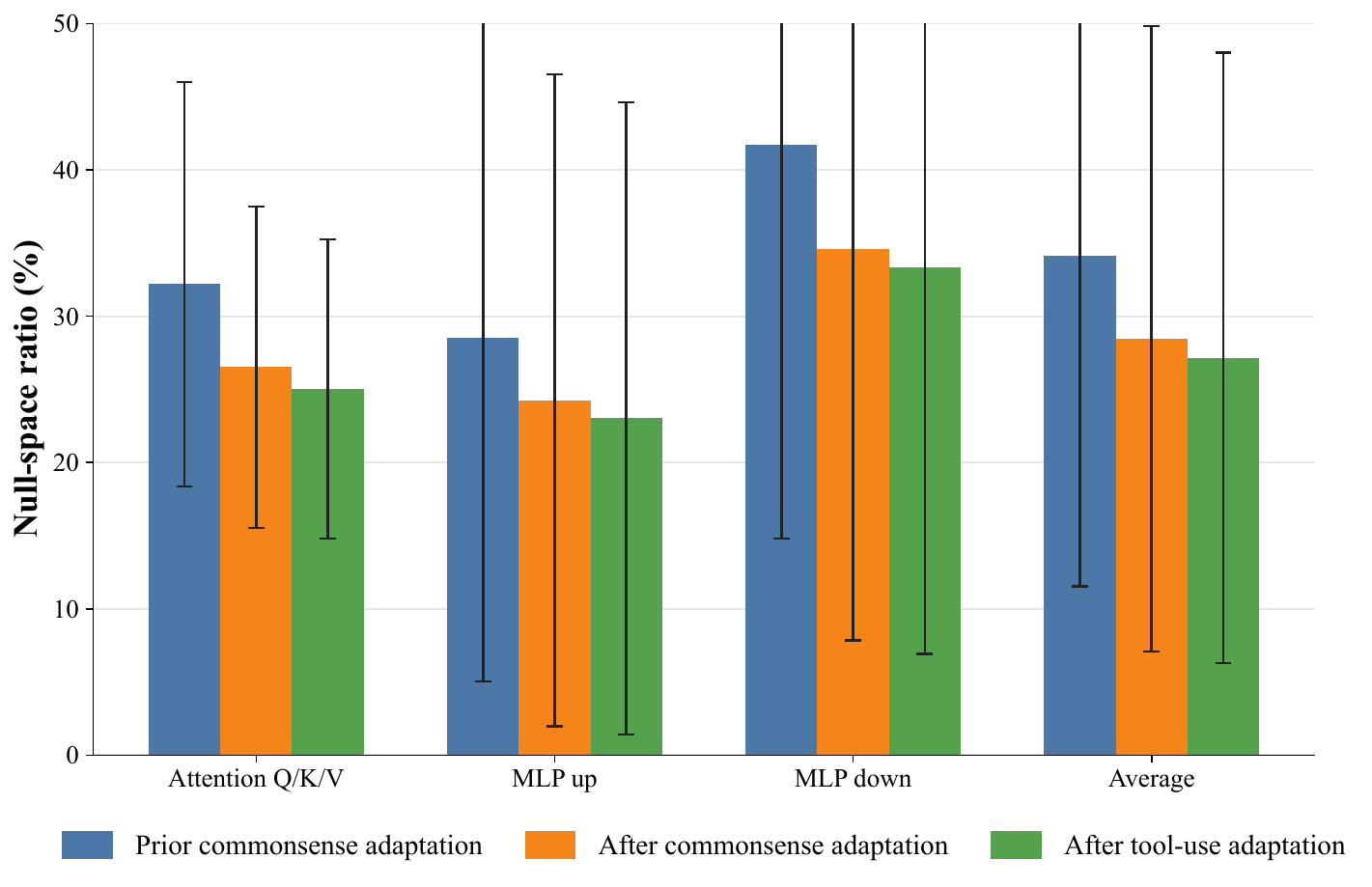}
    \caption{\textbf{Null-space capacity remains after sequential adaptation.} Mean null-space ratio across target module groups before commonsense fine-tuning, before tool-use adaptation, and after tool-use adaptation. The null-space ratio is defined as the retained basis dimension divided by the activation-space dimension. Error bars denote one standard deviation across layers. Although the null-space ratio decreases across the adaptation stages, the post-tool-use model retains substantial approximate null-space capacity across attention and MLP projections.}
    \label{fig:sequential_null_space_capacity}
\end{figure}

\section{Experiment Details}\label{app:appendix_experiment_details}

This appendix provides additional implementation details for reproducibility, including the training hyperparameters, datasets, prompt templates, and evaluation settings used in our experiments. All experiments were conducted on NVIDIA H100 GPUs with 80\,GB of memory.

\subsection{Hyperparameters Details}
The hyperparameters used for NB-LoRA are summarized in Table~\ref{tab:nb_lora_hyperparameters}.

\begin{table}[h]
\centering
\small
\caption{Training and null-space construction hyperparameters for NB-LoRA. \label{tab:nb_lora_hyperparameters}}
\begin{tabular}{lcc}
\toprule
Hyperparameter & Commonsense reasoning & Tool use \\
\midrule
Optimizer & \multicolumn{2}{c}{Adam} \\
Learning rate & $1 \times10^{-4}$ & $2\times10^{-4}$ \\
Epochs & \multicolumn{2}{c}{3} \\
Batch size & \multicolumn{2}{c}{16} \\
Maximum sequence length & 1024 & 4096 \\
LoRA rank $r$ & \multicolumn{2}{c}{16} \\
LoRA scaling $\alpha$ & \multicolumn{2}{c}{32} \\
LoRA dropout & \multicolumn{2}{c}{0.05} \\
Adapted modules & \multicolumn{2}{c}{Query, key, value, up-, and down-projections} \\
Anchor corpus & \multicolumn{2}{c}{MATH-lighteval (3B); GSM8K (1.5B)} \\
Number of anchor examples & \multicolumn{2}{c}{512} \\
Protected spectral energy & \multicolumn{2}{c}{99\%} \\
Reasoning tokens & \multicolumn{2}{c}{All tokens in the generated reasoning traces} \\
\bottomrule
\end{tabular}
\end{table}

\subsection{Datasets Details}
We provide additional details on the commonsense reasoning and tool-use datasets used for downstream adaptation, as well as MMLU-STEM, which is used only for held-out evaluation.
\ifdefined\preprintlayout
\begin{longtable}{@{}p{0.14\textwidth}p{\dimexpr0.86\textwidth-2\tabcolsep\relax}@{}}
\else
\begin{table}[H]
\fi
\caption{The prompt template of the commonsense reasoning datasets.}
\label{tab:input_template}

\ifdefined\preprintlayout
\\
\else
\begin{tabularx}{\textwidth}{lX}
\fi
\toprule
Dataset & Input Template \\
\midrule

ARC-C/E & 
\begin{minipage}{\linewidth}
\begin{tcolorbox}[colback=gray!10,colframe=black!75!black, left=1mm, right=1mm, top=1mm, bottom=1mm]
Please choose the correct answer to the question: [QUESTION] 

Answer1: [ANSWER\_1]

Answer2: [ANSWER\_2]

Answer3: [ANSWER\_3]

Answer4: [ANSWER\_4]

Answer format: answer1/answer2/answer3/answer4

the correct answer is [ANSWER]
\end{tcolorbox}
\end{minipage} \\
\midrule

BoolQ & 
\begin{minipage}{\linewidth}
\begin{tcolorbox}[colback=gray!10,colframe=black!75!black, left=1mm, right=1mm, top=1mm, bottom=1mm]
Please answer the following question with true or false, question: [QUESTION]

Answer format: true/false

the correct answer is [ANSWER]
\end{tcolorbox}
\end{minipage} \\
\midrule

HellaSwag & 
\begin{minipage}{\linewidth}
\begin{tcolorbox}[colback=gray!10,colframe=black!75!black, left=1mm, right=1mm, top=1mm, bottom=1mm]
Please choose the correct ending to complete the given sentence: [ACTIVITY\_LABEL]: [CONTEXT]

Ending1: [ENDING\_1] 

Ending2: [ENDING\_2] 

Ending3: [ENDING\_3] 

Ending4: [ENDING\_4] 

Answer format: ending1/ending2/ending3/ending4 

the correct answer is [ANSWER]
\end{tcolorbox}
\end{minipage} \\
\midrule

OpenBookQA & 
\begin{minipage}{\linewidth}
\begin{tcolorbox}[colback=gray!10,colframe=black!75!black, left=1mm, right=1mm, top=1mm, bottom=1mm]
Please choose the correct answer to the question: [QUESTION] 

Answer1: [ANSWER\_1]

Answer2: [ANSWER\_2]

Answer3: [ANSWER\_3]

Answer4: [ANSWER\_4]

Answer format: answer1/answer2/answer3/answer4

the correct answer is [ANSWER]
\end{tcolorbox}
\end{minipage} \\
\midrule

PIQA & 
\begin{minipage}{\linewidth}
\begin{tcolorbox}[colback=gray!10,colframe=black!75!black, left=1mm, right=1mm, top=1mm, bottom=1mm]
Please choose the correct solution to the question: [QUESTION]

Solution1: [SOLUTION\_1]

Solution2: [SOLUTION\_2]

Answer format: solution1/solution2 

the correct answer is [ANSWER]
\end{tcolorbox}
\end{minipage} \\
\midrule

SocialIQA & 
\begin{minipage}{\linewidth}
\begin{tcolorbox}[colback=gray!10,colframe=black!75!black, left=1mm, right=1mm, top=1mm, bottom=1mm]
Please choose the correct answer to the question: [QUESTION]

Answer1: [ANSWER\_1]

Answer2: [ANSWER\_2]

Answer3: [ANSWER\_3]

Answer format: answer1/answer2/answer3 

the correct answer is [ANSWER]
\end{tcolorbox}
\end{minipage} \\
\midrule

WinoGrande & 
\begin{minipage}{\linewidth}
\begin{tcolorbox}[colback=gray!10,colframe=black!75!black, left=1mm, right=1mm, top=1mm, bottom=1mm]
Please choose the correct answer to fill in the blank to complete the given sentence: [SENTENCE]

Option1: [OPTION\_1]

Option2: [OPTION\_2]

the correct answer is [ANSWER]
\end{tcolorbox}
\end{minipage} \\
\bottomrule

\ifdefined\preprintlayout
\end{longtable}
\else
\end{tabularx}
\end{table}
\fi

\subsubsection{Commonsense Reasoning Benchmarks}

The commonsense reasoning benchmarks contain approximately 170K training examples from eight tasks: ARC-C and ARC-E \citep{clark2018think}, BoolQ \citep{clark2019boolq}, HellaSwag \citep{zellers2019hellaswag}, OpenBookQA \citep{mihaylov2018can}, PIQA \citep{bisk2020piqa}, SocialIQA \citep{sap2019socialiqa}, and WinoGrande \citep{sakaguchi2021winogrande}.
We briefly describe these eight tasks below.
\begin{itemize}[leftmargin=*]
    \item \textbf{ARC-C/E} comprises the Challenge and Easy partitions of ARC, a collection of multiple-choice science questions drawn from grade-school examinations.
    \item \textbf{BoolQ} contains naturally occurring yes-or-no questions paired with passages that provide the information needed to answer them.
    \item \textbf{HellaSwag} asks the model to select the most plausible continuation of a given context from four candidate endings.
    \item \textbf{OpenBookQA} evaluates elementary science question answering that requires combining core scientific facts with broader commonsense knowledge.
    \item \textbf{PIQA} measures physical commonsense by asking the model to choose the more plausible of two solutions to an everyday problem.
    \item \textbf{SocialIQA} uses three-choice questions to assess reasoning about people's intentions, actions, reactions, and social consequences.
    \item \textbf{WinoGrande} evaluates commonsense and coreference reasoning through binary-choice fill-in-the-blank questions derived from ambiguous sentences.
\end{itemize}
The input template, i.e., prompt format for these datasets is detailed in Table \ref{tab:input_template}~\citep{hu2023llm,fang2025federated}.

\subsubsection{Tool-Use Tasks}

For the tool-use task, we use ToolAlpaca~\citep{tang2023toolalpaca}, in which the model receives a user request and natural-language documentation for a tool collection, and must select an API and produce its arguments as a JSON object. We retain only single-call examples because they provide clean, self-contained supervision, whereas later calls in multi-call trajectories may depend on unobserved intermediate tool outputs. The resulting subset contains 2,513 training examples and 62 evaluation examples. The generic prompt format is shown in Table~\ref{tab:tool_use_input_template}, and representative examples are shown in Table~\ref{tab:tool_use_examples}.

\begin{table}[H]
\caption{The prompt template of the tool-use adaptation dataset.}
\label{tab:tool_use_input_template}
\begin{tabularx}{\textwidth}{lX}
\toprule
Dataset & Input Template \\
\midrule
Tool use &
\begin{minipage}{\linewidth}
\begin{tcolorbox}[colback=gray!10,colframe=black!75!black, left=1mm, right=1mm, top=1mm, bottom=1mm]
\raggedright
Your task is to answer the user's question using available tools.

You have access to the following tools:

Name: [TOOL NAME]

Description: [TOOL DESCRIPTION]

Documentation:

[ACTION NAME]: [ACTION DESCRIPTION]

Parameters: [JSON PARAMETER SCHEMA]

Output: [OUTPUT DESCRIPTION]

Use the following format:

Thought: you should always think about what to do

Action: the action to take, which should be one of the available action names

Action Input: the input to the action in JSON format

Begin!

Question: [USER QUESTION]
\end{tcolorbox}
\end{minipage} \\
\bottomrule
\end{tabularx}
\end{table}

\begin{table}[H]
\caption{Representative tool-use examples from the Axolotl tool collection. Each gold response identifies the correct API and supplies its arguments as a JSON object.}
\label{tab:tool_use_examples}
\begin{tabularx}{\textwidth}{lX}
\toprule
Example & User Request and Gold Tool Call \\
\midrule
Random image &
\begin{minipage}{\linewidth}
\begin{tcolorbox}[colback=gray!10,colframe=black!75!black, left=1mm, right=1mm, top=1mm, bottom=1mm]
\raggedright
\textbf{Question:} Hey, can you show me a random picture of an axolotl?

\textbf{Action:} \texttt{getRandomAxolotlImage}

\textbf{Action Input:} \texttt{\{\}}
\end{tcolorbox}
\end{minipage} \\
\midrule
Filtered search &
\begin{minipage}{\linewidth}
\begin{tcolorbox}[colback=gray!10,colframe=black!75!black, left=1mm, right=1mm, top=1mm, bottom=1mm]
\raggedright
\textbf{Question:} I'm looking for an axolotl that is wild in color and medium in size. Can you help me find some pictures?

\textbf{Action:} \texttt{searchAxolotlImages}

\textbf{Action Input:} \texttt{\{"color": "wild", "gender": "", "size": "medium", "page": 1\}}
\end{tcolorbox}
\end{minipage} \\
\midrule
Fact lookup &
\begin{minipage}{\linewidth}
\begin{tcolorbox}[colback=gray!10,colframe=black!75!black, left=1mm, right=1mm, top=1mm, bottom=1mm]
\raggedright
\textbf{Question:} Can you tell me three interesting facts about axolotls' habitats?

\textbf{Action:} \texttt{getAxolotlFacts}

\textbf{Action Input:} \texttt{\{"category": "habitat", "limit": 3\}}
\end{tcolorbox}
\end{minipage} \\
\bottomrule
\end{tabularx}
\end{table}

\subsubsection{MMLU-STEM Benchmark}
MMLU~\citep{hendrycks2020measuring} is a multiple-choice benchmark spanning 57 academic and professional subjects. Its STEM subset contains 3,018 evaluation questions across 18 subjects in mathematics, physics, chemistry, biology, computer science, and engineering, ranging from elementary to college level. MMLU-STEM is used only for held-out evaluation; its examples are not used for downstream adaptation or null-space construction.

\section{Qualitative Case Study: Reasoning Retention and Downstream Adaptation}
\label{app:reasoning_examples}

We examine two complementary aspects of post-RL adaptation. First, using MATH-lighteval queries \citep{hendrycks2021measuring}, we compare the Qwen2.5-3B post-RL model before and after adaptation with Vanilla LoRA and NB-LoRA, thereby assessing retention of its original mathematical reasoning ability. Second, using HellaSwag queries \citep{zellers2019hellaswag}, we illustrate adaptation to a downstream commonsense task.
\subsection{MATH Reasoning Examples}


\paragraph{Query.}
How many vertical asymptotes does the graph of
$y=2/(x^2+x-6)$ have? Think step by step and place the final answer in
$\boxed{\text{answer}}$.

\paragraph{Reference answer.} $2$.

\begin{tcolorbox}[title={Post-RL reasoning model (correct)},
colback=gray!6,colframe=black!65,left=1mm,right=1mm,top=1mm,bottom=1mm]
To determine the number of vertical asymptotes, we need to find the values of
$x$ that make the denominator zero. First, factor the denominator:
\[
x^2+x-6=(x+3)(x-2).
\]
Setting it equal to zero gives $x=-3$ and $x=2$. These are potential vertical
asymptotes, and neither is canceled because the numerator is the nonzero
constant $2$. Therefore, the function has two vertical asymptotes, and the
final answer is $\boxed{2}$.
\end{tcolorbox}

\begin{tcolorbox}[title={After Vanilla LoRA (incorrect)},
colback=red!3,colframe=red!55!black,left=1mm,right=1mm,top=1mm,bottom=1mm]
\texttt{the correct answer is \textbackslash boxed\{6\}}

\end{tcolorbox}

\begin{tcolorbox}[title={After NB-LoRA (correct)},
colback=green!3,colframe=green!45!black,left=1mm,right=1mm,top=1mm,bottom=1mm]
The given function is a rational function. To find its vertical asymptotes, we
find the values of $x$ that make the denominator zero:
\[
x^2+x-6=(x+3)(x-2)=0.
\]
Thus, $x=-3$ and $x=2$, so the graph has two vertical asymptotes. The final
answer is $\boxed{2}$.
\end{tcolorbox}


\paragraph{Query.}
What is the smallest value of $x$ such that
$|5x-1|=|3x+2|$? Express the answer as a common fraction, think step by step,
and place the final answer in $\boxed{\text{answer}}$.

\paragraph{Reference answer.} $-1/8$.

\begin{tcolorbox}[title={Post-RL reasoning model (correct)},
colback=gray!6,colframe=black!65,left=1mm,right=1mm,top=1mm,bottom=1mm]
The expressions inside the absolute values change sign at $x=1/5$ and
$x=-2/3$, which divide the number line into three intervals.

For $x<-2/3$, both expressions are negative, so
\[
-(5x-1)=-(3x+2) \implies x=\frac{3}{2},
\]
which is outside this interval. For $-2/3<x<1/5$, the first expression is
negative and the second is positive, so
\[
-(5x-1)=3x+2 \implies x=-\frac{1}{8},
\]
which lies in this interval. For $x>1/5$, both expressions are positive, so
\[
5x-1=3x+2 \implies x=\frac{3}{2},
\]
which is valid in this interval. The smaller of the two valid solutions is
$-1/8$, so the final answer is $\boxed{-\frac{1}{8}}$.
\end{tcolorbox}

\begin{tcolorbox}[title={After Vanilla LoRA (incorrect)},
colback=red!3,colframe=red!55!black,left=1mm,right=1mm,top=1mm,bottom=1mm]
\texttt{the correct answer is \textbackslash boxed\{\textbackslash
frac\{3\}\{4\}\}}
\end{tcolorbox}

\begin{tcolorbox}[title={After NB-LoRA (correct)},
colback=green!3,colframe=green!45!black,left=1mm,right=1mm,top=1mm,bottom=1mm]
The critical points are $x=1/5$ and $x=-2/3$, yielding the intervals
$(-\infty,-2/3)$, $(-2/3,1/5)$, and $(1/5,\infty)$.

For $x<-2/3$, both expressions are negative:
\[
-(5x-1)=-(3x+2) \implies x=\frac{3}{2},
\]
which is not in the interval. For $-2/3<x<1/5$, the first expression is
negative and the second is positive:
\[
-(5x-1)=3x+2 \implies x=-\frac{1}{8},
\]
which is valid. For $x>1/5$, both expressions are positive:
\[
5x-1=3x+2 \implies x=\frac{3}{2},
\]
which is also valid. Hence, the smaller solution is $-1/8$, and the final
answer is $\boxed{-\frac{1}{8}}$.
\end{tcolorbox}
\subsection{HellaSwag Examples}


\paragraph{Query.}
A man is holding a pocket knife while sitting on some rocks in the
wilderness. Then he:
\begin{enumerate}[leftmargin=*,nosep]
    \item opens a can of oil, puts oil on the knife, presses it through the
          can, then cuts pieces from sandwiches;
    \item takes a small stone from the flowing river and smashes it on another
          stone;
    \item uses the knife to shave his leg; or
    \item sands the rocks and tops them using strong pressure.
\end{enumerate}

\paragraph{Reference answer.} \texttt{ending2}.

\begin{tcolorbox}[title={Post-RL reasoning model (incorrect)},
colback=gray!6,colframe=black!65,left=1mm,right=1mm,top=1mm,bottom=1mm]
\texttt{The correct ending to complete the given sentence is: Ending3: uses
the knife to shave his leg.}
\end{tcolorbox}

\begin{tcolorbox}[title={After Vanilla LoRA (correct)},
colback=red!3,colframe=red!55!black,left=1mm,right=1mm,top=1mm,bottom=1mm]
\texttt{The correct answer is ending2.}
\end{tcolorbox}

\begin{tcolorbox}[title={After NB-LoRA (correct)},
colback=green!3,colframe=green!45!black,left=1mm,right=1mm,top=1mm,bottom=1mm]
\texttt{The correct answer is ending2, because it logically follows.}
\end{tcolorbox}


\paragraph{Query.}
A boy is running down a track. The boy:
\begin{enumerate}[leftmargin=*,nosep]
    \item runs into a car;
    \item gets in a mat;
    \item lifts his body above the height of a pole; or
    \item stands on his hands and springs.
\end{enumerate}

\paragraph{Reference answer.} \texttt{ending3}.

\begin{tcolorbox}[title={Post-RL reasoning model (incorrect label)},
colback=gray!6,colframe=black!65,left=1mm,right=1mm,top=1mm,bottom=1mm]
\texttt{The correct ending to complete the given sentence is: Ending1: lifts
his body above the height of a pole.}
\end{tcolorbox}

\begin{tcolorbox}[title={After Vanilla LoRA (correct)},
colback=red!3,colframe=red!55!black,left=1mm,right=1mm,top=1mm,bottom=1mm]
\texttt{The correct answer is ending3.}
\end{tcolorbox}

\begin{tcolorbox}[title={After NB-LoRA (correct)},
colback=green!3,colframe=green!45!black,left=1mm,right=1mm,top=1mm,bottom=1mm]
\texttt{The correct answer is ending3, because it logically follows.}
\end{tcolorbox}

\subsection{Summary}

Across the MATH examples, NB-LoRA retains the post-RL model's multi-step reasoning ability and required answer format after downstream adaptation, whereas Vanilla LoRA produces incorrect answers without an explicit reasoning process. The HellaSwag examples complement this analysis by showing that both adaptation methods acquire the downstream commonsense task and select the correct continuation in both cases. Taken together, these examples illustrate that NB-LoRA achieves the intended balance between preserving the original reasoning ability and acquiring a new downstream capability.
\end{document}